\documentclass[twoside]{article}

\usepackage{soul}
\usepackage{graphicx}
\usepackage{amsfonts}
\usepackage{mathtools}
\usepackage[colorlinks=true]{hyperref}
\usepackage{xcolor}
\definecolor{linkblue}{RGB}{0,0,128}
\hypersetup{
    linkcolor=linkblue,
    citecolor=linkblue,
    urlcolor=linkblue
}
\usepackage{bbm}
\usepackage{algorithm}
\usepackage{algpseudocode}
\usepackage{multirow}
\usepackage[normalem]{ulem}

\DeclareMathOperator*{\argmin}{argmin}
\usepackage[round]{natbib}

\usepackage{mathtools}
\mathtoolsset{showonlyrefs=true}

\newcommand{\GP}{\textsc{gp}}
\newcommand{\PROGP}{\textsc{pro-gp}}
\newcommand{\SVGP}{\textsc{svgp}}
\newcommand{\PPGPR}{\textsc{ppgpr}}
\newcommand{\IPROGP}{\textsc{i-pro-gp}}
\newcommand{\SIPROGP}{\textsc{i-pro-gp}\textsuperscript{\scriptsize{\textsc{s}}}}
\newcommand{\PIPROGP}{\textsc{i-pro-gp}\textsuperscript{\scriptsize{\textsc{p}}}}

\newcommand{\dd}{\mathrm{d}}
\newcommand{\mbf}{\mathbf}
\newcommand{\mbfu}{\mathbf{u}}
\DeclareMathOperator{\KL}{KL}

\usepackage{booktabs}
\usepackage{amsthm}
\newtheorem{assumption}{Assumption}
\newtheorem{theorem}{Theorem}

\newtheorem{lemma}{Lemma}

\newenvironment{talign*}
 {\csname align*\endcsname}
 {\endalign}

\usepackage[symbols,nogroupskip,sort=none]{glossaries-extra}
\newglossarystyle{mysymbols}{
  \setglossarystyle{long}
  
}
\setglossarystyle{mysymbols}
\glsxtrnewsymbol[description={Input space}]{S1}{$\mathcal{X}$}
\glsxtrnewsymbol[description={Response space}]{S2}{$\mathcal{Y}$}
\glsxtrnewsymbol[description={Observed data $\{(x_i, y_i) \}_{i=1}^n$}]{S4}{$\mathcal{D}_n$}
\glsxtrnewsymbol[description={Space of Borel probability measures}]{S3}{$\mathcal{P}$}
\glsxtrnewsymbol[description={Polish function space}]{F1}{$\mathcal{F}$}
\glsxtrnewsymbol[description={Prior measure on $\mathcal{F}$}]{F2}{$\Pi$}
\glsxtrnewsymbol[description={Conditional response model for $f \in \mathcal{F}$ and $x \in \mathcal{X}$}]{S5}{$P_f^x$}
\glsxtrnewsymbol[description={Density of conditional response model $P_f^x$ w.r.t Lebesgue measure}]{S6}{$p_f(\cdot \mid x)$}
\glsxtrnewsymbol[description={Collection of conditional responses $\{P_f^x : x \in \mathcal{X}\}$}]{S7}{$P_f$}
\glsxtrnewsymbol[description={True conditional responses}]{S8}{$P_0$}
\glsxtrnewsymbol[description={Family of nonparametric models $\{P_f : f \in \mathcal{F} \}$}]{S9}{$\mathcal{M}_{\mathcal{F}}$}
\glsxtrnewsymbol[description={Bayes posterior given data $\mathcal{D}_n$}]{S12}{$Q_n^B$}
\glsxtrnewsymbol[description={Predictively Oriented posterior given data $\mathcal{D}_n$}]{S13}{$Q_n^{\mathrm{PrO}}$}
\glsxtrnewsymbol[description={Posterior predictive measure $\int_{\mathcal{F}} P_f^x Q(\dd f)$ at $x$ given $Q \in \mathcal{P}
(\mathcal{F)}$}]{S14}{$P_Q^x$}
\glsxtrnewsymbol[description={Kullback-Leibler divergence}]{S15}{\textnormal{KL}}
\glsxtrnewsymbol[description={Gaussian density with variance v}]{S17}{$\varphi_v$}
\glsxtrnewsymbol[description={Probability measure on $\mathbb{R}^d$}]{S18}{$\tau$}
\glsxtrnewsymbol[description={Prior-conditional lift $\int_{\mathbb{R}^d} \Pi(\dd f \mid \mbf{u}) \tau(\dd \mbfu)$}]{S19}{$Q^\tau$}
\glsxtrnewsymbol[description={PrO-GP objective (exact or inducing)}]{S20}{$\mathcal{J}$}
\glsxtrnewsymbol[description={Proper scoring rule $S: \mathcal{P}(\mathcal{Y}) \times \mathcal{Y} \to \mathbb{R} \cup \{\infty \}$} ]{S21}{$S$}
\glsxtrnewsymbol[description={Function-space PrO objective with prior $\Pi$ and scoring rule $S$}]{S22}{$\mathcal{J}^{\Pi}_S$}

\usepackage[preprint]{aistats2027}

\begin{document}

%

%

\twocolumn[

\aistatstitle{Predictively Oriented Gaussian Process Posteriors}

\aistatsauthor{ Callum Lau \And Jeremias Knoblauch \And  Louis Sharrock }

\aistatsaddress{ University College London \And  University College London \And University College London } ]

\begin{abstract}
    Gaussian Processes (GPs) are a powerful tool for modelling and quantifying uncertainty in functional relationships. However, they require practitioners to make a number of design decisions, such as the choice of the kernel and the observation model. Suboptimal choices can produce misspecified models that do not capture the underlying data generating process. We introduce Predictively Oriented Gaussian Processes (PrO-GPs), which treat predictive uncertainty as the primary inferential target and provide a robust alternative to standard GPs. Although direct computation of a PrO posterior for nonparametric models is intractable, we derive a reduced formulation and practical sampling scheme for efficient computation. Through synthetic and real data experiments, we show that PrO-GPs produce better calibrated predictive distributions under model misspecification compared to standard GP approaches.
\end{abstract}

\section{INTRODUCTION}

Gaussian processes (GPs) provide a flexible framework for modelling functional relationships \citep{rasmussen2006gaussian}. Through their covariance kernel, GPs can encode prior beliefs about the structure of those relationships, making them well-suited to settings in which data is scarce or expensive to obtain. Hence, GPs are widely used in applications such as reconstructing geospatial fields from sparse measurements \citep{hugonnet2021accelerated, berlinghieri2023gaussian}, regressing atomistic properties of materials \citep{deringer2021gaussian}, and forecasting astronomical time series \citep{aigrain2023gaussian}.

\begin{figure}[h!]
\centering
\includegraphics[width=0.45\textwidth]{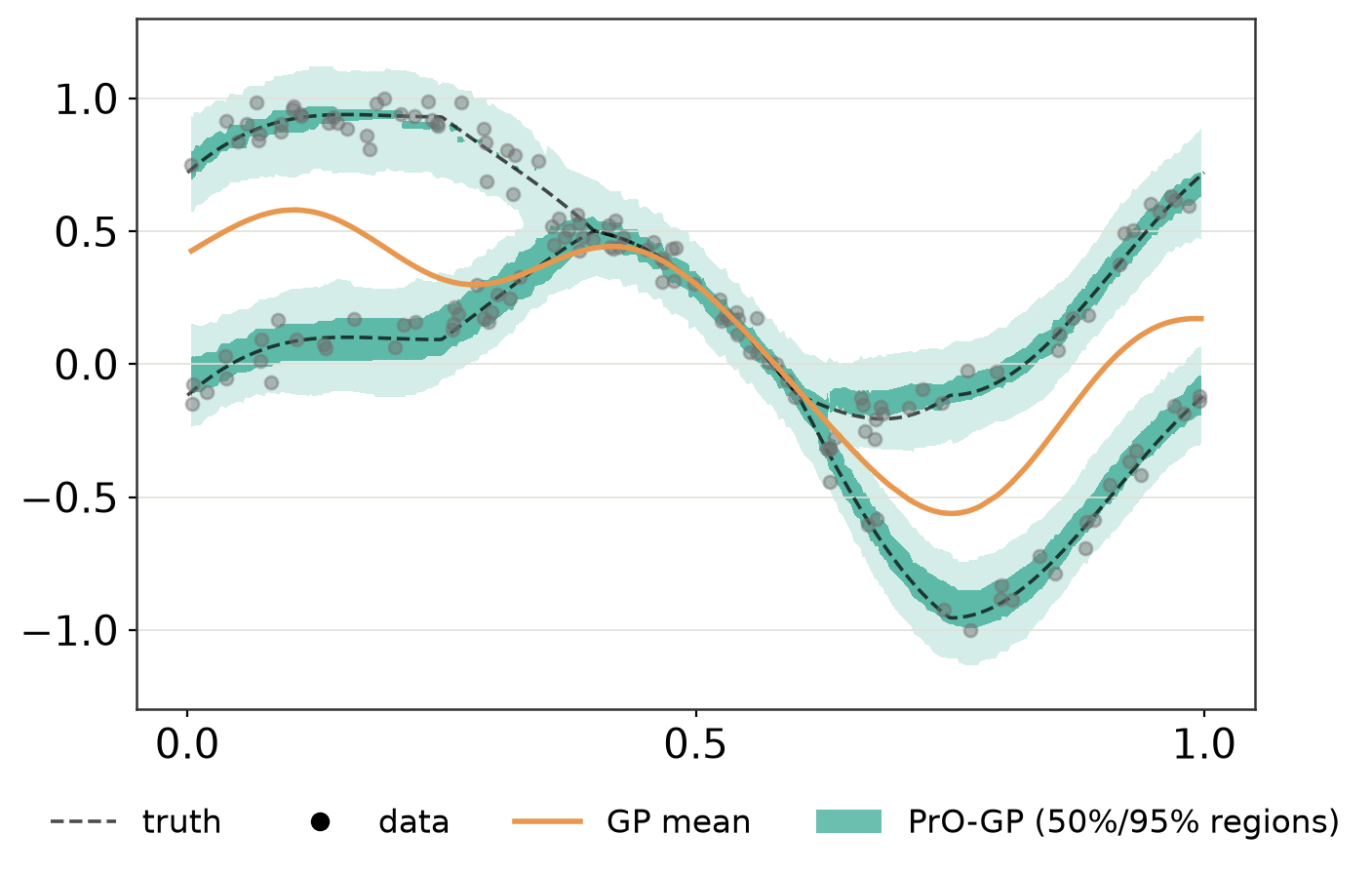}
    \vspace{-1mm}
    \caption{\textit{Bayesian GP and PrO-GP posterior predictive on multimodal data}. The GP posterior is centred around a single compromise function, with its predictive mean lying between the two modes. The PrO-GP instead recovers a mixture of functions, assigning predictive mass to both branches.}
    \label{fig:multimodal}
\end{figure}

 However, correctly specifying the covariance kernel and likelihood model is challenging and greatly influences the resulting predictive distributions \citep{wilson2013gaussian}. Indeed, downstream decisions based on GPs are generally non-robust to kernel choice \citep{stephenson2022measuring}, and likelihood misspecification can lead to poor approximations of the data generating process (DGP) \citep{jylanki2011robust}. 

We introduce Predictively Oriented Gaussian Processes (PrO-GPs), a new class of GPs, to remain robust to model misspecification. Building on the framework of PrO posteriors \citep{ mclatchie2025predictively, shen2025prediction}, PrO-GPs seek a distribution over latent functions for which the induced posterior predictive best describes the data. This contrasts the Bayesian posterior which, under suitable regularity conditions, concentrates around the \textit{single} function in the assumed model class that best approximates the DGP \citep{kleijn2006misspecification}. Consequently, the associated predictive distribution asymptotically behaves like that induced by this single best-fitting function, even when no individual function in the model class provides an adequate predictive description of the data. See Figure~\ref{fig:multimodal} for an illustrative example.

\paragraph{Related Work} The predictive approach to GP regression has been explored previously. \citet{jankowiak2020parametric} introduce PPGPR, which directly optimises the posterior predictive distribution of variational GPs, while \cite{jankowiak2020deep} extend this predictive approach to Deep GPs. Similarly, \cite{wei2021direct} study direct predictive optimisation of GPs under both log and squared losses.  
More broadly, \citet{kock2026predictive} apply the framework of Predictive Variational Inference \citep{lai2024predictive} to GP regression. However, these approaches impose a Gaussian or finite Gaussian-mixture parametric restriction on the induced predictive distribution; PrO-GPs target an unrestricted objective, allowing the model to retain non-Gaussian uncertainty over the latent function.

Non-robustness in GPs to misspecification of the prior or likelihood is well known \citep{wynne2021convergence}. Prior misspecification can arise when the kernel does not capture the structural properties of the unknown function, motivating automated kernel discovery methods \citep{duvenaud2013structure} and more expressive kernel families \citep{wilson2016deep}. Likelihood misspecification commonly occurs when the convenient Gaussian noise assumption is violated due to heavy tails \citep{jylanki2011robust}, heterogeneity \citep{le2005heteroscedastic} or mixture likelihoods \citep{lu2023robust}. A related line of work leverages generalised Bayesian inference \citep{bissiri2016general, knoblauch2022optimization, wild2022generalized} and score-matching losses to construct GP posteriors which are robust to outliers whilst preserving conjugacy \citep{pmlr-v235-altamirano24a, laplante2025robust, rooijakkers2025multi}. Our approach is distinct: rather than changing the kernel, likelihood or loss to target a specific form of misspecification, we modify the inferential procedure itself to provide additional flexibility to recover the DGP.
\paragraph{Contributions} 
We introduce PrO-GPs, a new class of GPs which remain robust to model misspecification. Our contributions are threefold:

\vspace{-1mm}
\begin{itemize}
    \item We establish an exact feature-reduction theorem which transforms optimisation over probability measures on function space into optimisation over probability measures on a finite-dimensional feature space, providing a tractable Euclidean representation of PrO-GP posteriors.
    \item We introduce a principled inducing-feature approximation and provide an efficient sampling scheme for scalable optimisation of the Euclidean objective.
    \item We show PrO-GPs can significantly improve predictive uncertainty quantification on synthetic misspecification and real-data regression tasks.
\end{itemize}

\section{BACKGROUND} \label{sec:Background}
Let $\mathcal{X}$ denote the input space and let $\mathcal{Y}=\mathbb{R}$ 
denote the response space. We observe a dataset $\mathcal{D}_n\coloneq\{(x_i,y_i)\}_{i=1}^n$,
and consider the scalar regression 
\begin{equation}
    y_i=f_0(x_i)+\varepsilon_i,
    \label{eq:regression-dgp}
\end{equation}
where $f_0:\mathcal{X}\to\mathbb{R}$ is an unknown true latent function and
$\varepsilon_1,\ldots,\varepsilon_n$ are independent observation errors. Let $P_0^x\in\mathcal{P}(\mathcal{Y})$ denote the true conditional distribution of the response given $x$, and write $P_0\coloneq\{P_0^x:x\in\mathcal{X}\}$.
Given a function $f\in\mathcal{F}$ and an assumed error model, let
$P_f^{x} \coloneq P_f(\cdot\mid x) \in \mathcal{P}(\mathcal{Y})$ denote the corresponding model conditional distribution of the response and $p_f(\cdot \mid x)$ its density with respect to the Lebesgue measure. Also let $P_f \coloneq \{P_f^{x} : x \in \mathcal{X} \}$. This defines the model class
\[
    \mathcal{M}_{\mathcal{F}}
    \coloneq
    \{P_f:f\in\mathcal{F}\}.
\]
We say that the model is well-specified if $P_0\in\mathcal{M}_{\mathcal{F}}$, and misspecified otherwise. We refer the reader to Appendix \ref{ap:notation} for the notation used throughout the paper.

\subsection{Gaussian Processes} \label{sec:gps}

A GP defines a probability measure over functions through
a mean function $m:\mathcal{X}\to\mathbb{R}$ and a positive-definite covariance
kernel $k:\mathcal{X}\times\mathcal{X}\to\mathbb{R}$ \citep{rasmussen2006gaussian}. We write $f\sim\mathcal{GP}(m,k)$
and denote the corresponding probability measure on a suitable measurable function space $\mathcal{F}$ by $\Pi$. 

For a finite collection of inputs
$X=(x_1,\ldots,x_n)$, let $\mathbf{f}_X\coloneq\bigl(f(x_1),\ldots,f(x_n)\bigr)^\top.$
The defining property of a GP is that $\mathbf{f}_X\sim\mathcal{N}(\mathbf{m}_{X},K_{XX})$,
where $(\mathbf{m}_X)_i=m(x_i)$ and $(K_{XX})_{i,j}=k(x_i,x_j).$ For any additional input $x$, write $ \mathbf k_{x X} = \bigl(k(x_1,x),\ldots,k(x_n,x)\bigr)^\top.$ Then conditionally on $\mathbf{f}_X$, $f$ is also a GP: $f \mid \mathbf{f}_X \sim \mathcal{GP}(\mu_{\mathbf{f}_X}, s_X)$ with mean function and covariance kernel
\begin{align}
    \mu_{\mathbf{f}_X}(x) &= m(x) + \mathbf{k}_{x X}^\top K_{XX}^{-1}(\mathbf{f}_X - \mathbf{m}_X) \notag \\
s_X(x,x')
    &=
    k(x, x')
    -
    \mathbf k_{x X}^\top
    K_{XX}^{-1}
    \mathbf k_{X x'}.
    \label{eq:sn_squared}
\end{align}

Hence for a single additional input $x_\star$ with realisation $f_{\star}$, this gives $f_\star \mid \mathbf{f}_X \sim \mathcal{N}(\mu_{\mathbf{f}_X}(x_\star), s_X(x_\star, x_\star))$. 

In the regression setting (\ref{eq:regression-dgp}), we observe noisy realisations $\mathbf{y} = (y_1, \ldots, y_n)^\top$ and are interested in computing the posterior predictive distribution $y_\star \mid x_\star, X, \mathbf{y}$. Bayesian inference does so via a two-step procedure. First, construct the posterior over $\mathcal{F}$, which we denote $Q_n^B$. This is obtained by updating the GP prior $\Pi$ using the likelihood, which corresponds to
\begin{equation}
    Q_n^{\mathrm{B}}(\mathrm{d}f)
    \coloneq
    \Pi(\mathrm{d}f\mid X, \mathbf{y})
    \propto
    \left\{
        \prod_{i=1}^n
        p_f(y_i\mid x_i)
    \right\}
    \Pi(\mathrm{d}f).
    \label{eq:bayesian-function-posterior}
\end{equation}
Second, average the conditional response distribution at $x_\star$ over the posterior $Q_n^B$:
\begin{equation*}
    P_{Q_n^{\mathrm{B}}}^{x_\star}
    =
    \int_{\mathcal{F}}
    P_f^{x_\star}\,
    Q_n^{\mathrm{B}}(\mathrm{d}f).
    \label{eq:bayesian-posterior-predictive}
\end{equation*}

In the special case of a homoscedastic Gaussian response model $P_f^{x}
    =
    \mathcal N\bigl(f(x),\sigma^2\bigr)
$, the GP prior is conjugate and hence the posterior predictive is Gaussian.


\subsection{Variational form of Bayesian posteriors}

We have shown that Bayesian inference first constructs a posterior distribution over functions $Q^B_n$, from which the posterior predictive follows as a mixture. More generally, any probability measure $Q\in\mathcal{P}(\mathcal{F})$
over functions induces the predictive
\begin{equation}
\begin{aligned}
    P_Q^{x}
    \coloneq
    \int_{\mathcal{F}}
    P_f^{x}\,Q(\mathrm{d}f).
    \label{eq:function-mixture-predictive}
\end{aligned}
\end{equation}
In this work, we are interested in obtaining a posterior $Q$ which is \textit{predictively oriented}, meaning the effect of $Q$ on $P_Q$ is the primary object of concern. 

To do so, we introduce the variational characterisation of the Bayesian posterior
\citep{donsker1975asymptotic,zellner1988optimal}. In particular, when
$\lambda_n=n$,  $Q_n^{\mathrm B}$ is the solution of
\begin{equation}
\begin{aligned}
\argmin_{Q\in\mathcal P(\mathcal F)}
\bigg\{
-\frac{\lambda_n}{n}
\sum_{i=1}^n
\int_{\mathcal F}
\log p_f(y_i\mid x_i)\,
Q(\dd f) 
+
\mathrm{KL}(Q\Vert\Pi)
\bigg\}.
\label{eq:variational-bayes-posterior}
\end{aligned}
\end{equation}
See Appendix~\ref{ap:bayes-as-gibbs} for a derivation. Each function $f$ is assessed through the score of its conditional predictive distribution $P_f^{x_i}$, and one targets a distribution $Q$ over $f$ which minimises these function-wise losses, subject to a KL-penalty which regularises deviations from the prior $\Pi$. Under suitable regularity conditions, the Bayesian posterior concentrates on a single loss-minimising function when $n \to \infty$ \citep{kleijn2006misspecification}, even when the model is misspecified.

Lastly, we note that \eqref{eq:variational-bayes-posterior} sits within the wider framework of Gibbs posteriors \citep{bissiri2016general}, where replacing the logarithmic score with other proper scoring rules \citep{gneiting2007strictly} yields general Bayesian updates which target alternative divergences to the KL divergence \citep{jewson2018principles}. For example, \cite{pmlr-v235-altamirano24a} introduce score-matching losses for robustness in GPs under Huber contamination \citep{huber1992robust}, whilst \citet{shen2025prediction} use the maximum mean discrepancy \citep{JMLR:v13:gretton12a} to derive a predictively oriented posterior for parametric models. Although our most general objective in Appendix \ref{ap:theoretical-proofs}, and the alternative sampling scheme in Appendix \ref{ap:wgf-sampling} make extensions to other scoring rules possible, in this work we focus on the logarithmic score due to its favourable computational properties.

\section{METHODOLOGY} \label{sec:Method}

We now develop a predictively oriented posterior directly on function space.
Rather than constructing a posterior over functions by scoring each function individually, we seek a probability measure over functions whose induced predictive distribution best describes the observed data.
The resulting optimisation problem is defined in terms of probability measures over (infinite-dimensional) functions, and is therefore not immediately amenable to computation. First, we introduce the function-space PrO-GP objective in Section \ref{sec:pro-gps}. Subsequently, we derive a formulation which reduces this objective down to optimisation over probability measures on a finite-dimensional feature space in Section \ref{sec:exact-feature-reduction}. For scalable computation on large datasets, we introduce an inducing-point approximation in Section \ref{sec:inducing-feature-approximation}. Lastly, we present an efficient sampling algorithm in Section \ref{sec:computation} and discuss guidance for hyperparameter optimisation in Section \ref{sec:hyperparam-optimisation}.

\subsection{Predictively Oriented GP posteriors}
\label{sec:pro-gps}

Let $f \sim \mathcal{GP}(m, k)$ with prior measure $\Pi$ on a suitable measurable function space $\mathcal{F}$. Suppose $Y_i \mid f, x_i \sim P_f^{x_i} = \mathcal{N}\left(f(x_i), \sigma^2\right)$ for $i=1,\ldots,n$. From \eqref{eq:function-mixture-predictive}, recall that any $Q\in\mathcal P(\mathcal F)$ induces the mixture predictive
\(
P_Q^x
\coloneq
\int_{\mathcal F}P_f^x\,Q(\mathrm df).
\)
Motivated by \citet{mclatchie2025predictively}, we define a function-space PrO-GP posterior $Q_n^{\text{PrO}}$ as any solution of
\begin{equation}
\argmin_{Q\in\mathcal P(\mathcal F)}
\left\{
-\frac{\lambda_n}{n}
\sum_{i=1}^n
\log p_Q(y_i \mid x_i)
+
\KL(Q\Vert\Pi)
\right\}.
\label{eq:function-pro}
\end{equation}

The crucial distinction from the Bayesian posterior is the order of scoring and averaging.
In particular, while the Bayesian objective averages the individual scores,  the PrO objective scores the predictive law directly. 
Consequently, PrO inference optimises over the broader class of predictive laws induced by probability measures on $\mathcal F$, rather than only the individual predictive laws $P_f$. 

For example, consider the DGP $y_i = f_{2Z_i-1}(x_i) + \varepsilon_i$ where $Z_i \sim \operatorname{Bernoulli}(\pi)$ is some unobserved binary variable. 
Then the true conditional distribution is $P_0(\cdot \mid x) = (1-\pi) \mathcal{N}(f_{-1}(x), \sigma^2) + \pi\mathcal{N}(f_1(x), \sigma^2)$.
However, our working model (\ref{eq:regression-dgp}) assumes all observations are generated by one common latent function.
Hence, the Bayes posterior concentrates on a single compromise function, whereas the PrO posterior can capture uncertainty about the latent functional regime that generated each observation.

\citet{mclatchie2025predictively} studied the statistical properties of \eqref{eq:function-pro} for general scoring rules and in finite-dimensional parameter spaces, $\mathcal{F} = \mathbb{R}^d$. In particular, under suitable regularity conditions, PrO posteriors \textit{predictively dominate} Bayesian (and Gibbs) posteriors: if the model is well-specified, the asymptotic predictive performance of the PrO posterior is equivalent to the Bayesian posterior, but for a \textit{non-trivially misspecified}  model \citep[Definition 2]{mclatchie2025predictively} the predictive performance is strictly superior. The aforementioned example is precisely such a case: define $\nu_X \in \mathcal{P}(\mathcal{X})$ for the marginal distribution of the covariates and the expected score
\begin{equation*}
\mathcal{S}(P_Q, P_0)
\coloneq
\mathbb{E}_{\substack{
Y \sim P_0^X \\
X \sim \nu_X
}}
\left[-\log p_Q(Y \mid X)\right].
\end{equation*}
Then we have  $P_0 \notin \mathcal{M}_\mathcal{F}$ and for some $Q \in \mathcal{P}(\mathcal{F})$
\begin{equation*}
    \inf_{Q \in \mathcal{P}(\mathcal{F})} \mathcal{S}(P_Q, P_0) < \inf_{f \in \mathcal{F}} \mathcal{S}(P_f, P_0),
\end{equation*}
Our concern here is how to define and compute the corresponding object when the parameter is an entire function.

\subsection{Exact feature reduction} \label{sec:exact-feature-reduction}

The optimisation in \eqref{eq:function-pro} is over probability measures on an infinite-dimensional function space. At first sight, this appears substantially more difficult than its finite-dimensional counterpart.
However, it can be significantly simplified since the predictive law only depends on $f$ through the finite feature map $ \mathbf{f}_X$. We present the following reduction theorem.

\begin{theorem}[Exact GP reduction]
\label{thm:exact-gp-reduction} 
Define
$
\mbf u \coloneq \mbf f_X
$
such that 
$
\mbf{u} \sim\pi_X \coloneq \mathcal N(\mbf m_X,K_{XX}).
$
For any $\tau\in\mathcal P(\mathbb R^n)$ with $\KL(\tau\Vert\pi_X)<\infty$, and $i=1,\ldots,n$ let
\begin{equation*}
    P_\tau^{x_i}(\dd y)
    \coloneq
    \int_{\mathbb R^n}
    \mathcal N(\dd y;\mbf{u}_i,\sigma^2)
    \tau(\dd\mbf u).
    \label{eq:exact-reduced-predictive}
\end{equation*}
Then minimising the function-space PrO-GP objective in \eqref{eq:function-pro} is equivalent to
\begin{equation}
    \argmin_{\tau\in\mathcal P(\mathbb{R}^n)}
    \left\{
    -\frac{\lambda_n}{n}
    \sum_{i=1}^n
    \log p_\tau(y_i \mid x_i)
    +
    \KL(\tau \Vert \pi_X)
    \right\}.
    \label{eq:exact-pro-gp-objective}
\end{equation}
Moreover, if $\tau_n$ minimises the finite-dimensional objective, then a corresponding function-space minimiser is
\begin{equation}
    Q_n^{\mathrm{PrO}}(\dd f)
    =
    \int_{\mathbb R^n}
    \Pi(\dd f\mid  \mbf u)
    \tau_n( \dd\mbf u).
    \label{eq:fs-pro-posterior}
\end{equation}
\end{theorem}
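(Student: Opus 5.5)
The plan is to combine the KL chain rule (disintegration) with the observation that the data-fit term depends on $Q$ only through its pushforward under the evaluation map $T:\mathcal F\to\mathbb R^n$, $T(f)=\mbf f_X$. Write $\tau_Q\coloneq T_\#Q$.

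\textbf{Step 1: the data-fit term factors through $T$.} Since $p_f(y_i\mid x_i)=\varphi_{\sigma^2}(y_i-f(x_i))$ depends on $f$ only through $\mbf u_i=(Tf)_i$, the change of variables formula gives
\[
p_Q(y_i\mid x_i)=\int_{\mathcal F}\varphi_{\sigma^2}\bigl(y_i-(Tf)_i\bigr)\,Q(\dd f)=p_{\tau_Q}(y_i\mid x_i).
\]
Hence the first term of \eqref{eq:function-pro} evaluated at $Q$ equals the first term of \eqref{eq:exact-pro-gp-objective} evaluated at $\tau_Q$. Also $T_\#\Pi=\pi_X$ by the defining property of the GP.

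\textbf{Step 2: decompose the KL.} Since $\mathcal F$ is Polish, regular conditional distributions exist. Disintegrate $\Pi(\dd f)=\Pi(\dd f\mid\mbf u)\,\pi_X(\dd\mbf u)$ and $Q(\dd f)=Q(\dd f\mid\mbf u)\,\tau_Q(\dd\mbf u)$. The chain rule for relative entropy then gives
\[
\KL(Q\Vert\Pi)=\KL(\tau_Q\Vert\pi_X)+\int_{\mathbb R^n}\KL\bigl(Q(\cdot\mid\mbf u)\Vert\Pi(\cdot\mid\mbf u)\bigr)\,\tau_Q(\dd\mbf u)\;\ge\;\KL(\tau_Q\Vert\pi_X).
\]
Equality holds if and only if $Q(\cdot\mid\mbf u)=\Pi(\cdot\mid\mbf u)$ for $\tau_Q$-almost every $\mbf u$, that is, if and only if $Q=Q^{\tau_Q}$, where $Q^\tau(\dd f)=\int\Pi(\dd f\mid\mbf u)\,\tau(\dd\mbf u)$ is the prior-conditional lift. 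If $\KL(Q\Vert\Pi)=\infty$ the inequality is trivial. If it is finite, then $Q\ll\Pi$, and the Radon--Nikodym derivatives factor accordingly.

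\textbf{Step 3: the lift is a right inverse of the pushforward.} For any $\tau$ with $\KL(\tau\Vert\pi_X)<\infty$, we have $T_\#Q^\tau=\tau$, because $\Pi(\cdot\mid\mbf u)$ is concentrated on $\{f:\mbf f_X=\mbf u\}$ for $\pi_X$-a.e. $\mbf u$. This set has full $\tau$-measure since $\tau\ll\pi_X$. Moreover $\KL(Q^\tau\Vert\Pi)=\KL(\tau\Vert\pi_X)$, as the conditional term in Step 2 vanishes. Write $\mathcal J(Q)$ and $\mathcal J(\tau)$ for the objectives \eqref{eq:function-pro} and \eqref{eq:exact-pro-gp-objective}. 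Steps 1 and 2 give $\mathcal J(Q)\ge\mathcal J(\tau_Q)$, and Step 3 gives $\mathcal J(Q^\tau)=\mathcal J(\tau)$. Therefore $\inf_Q\mathcal J=\inf_\tau\mathcal J$. If $\tau_n$ minimises the reduced problem, then $Q^{\tau_n}$ attains the infimum, which is \eqref{eq:fs-pro-posterior}. Conversely, any function-space minimiser $Q$ yields the minimiser $\tau_Q$ and must equal $Q^{\tau_Q}$, which gives the claimed equivalence.

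\textbf{Main obstacle.} I expect Step 2 to be the delicate part: the rigorous chain rule on a general Polish function space. This includes measurability of $\mbf u\mapsto\KL(Q(\cdot\mid\mbf u)\Vert\Pi(\cdot\mid\mbf u))$ and the support property of $\Pi(\cdot\mid\mbf u)$ needed in Step 3. I would invoke the standard disintegration theorem together with the chain rule for relative entropy, for instance via the Donsker--Varadhan representation or Radon--Nikodym factorisation. A degenerate $K_{XX}$ would need a remark: restrict to its range, where $\pi_X$ is nondegenerate.
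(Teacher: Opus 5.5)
Your proposal is correct and follows the paper's proof essentially step for step. The score term depends on $Q$ only through $\tau_Q = T_\# Q$. The KL chain rule (the paper's disintegration lemma) then gives $\mathcal J(Q)\ge \mathcal J(\tau_Q)$, the prior-conditional lift gives $\mathcal J(Q^\tau)=\mathcal J(\tau)$, and minimality forces the conditional KL term to vanish.

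The differences are minor. The paper proves the result for a general measurable feature map and proper scoring rule and then specialises to $\phi_X(f)=\mathbf f_X$ with the log-score. Your use of $\tau\ll\pi_X$ to dispose of the $\pi_X$-null set where $\Pi(\cdot\mid\mathbf u)$ might not concentrate on the fibre is, if anything, slightly more careful than the paper's lift lemma.
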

Importantly, Theorem \ref{thm:exact-gp-reduction} replaces optimisation over probability measures on the function space $\mathcal F$ by optimisation over probability measures on the Euclidean feature space $\mathbb R^n$.
In fact, this is an application of a far more general result which establishes a Euclidean reduction for \textit{any} nonparametric model whose predictive law only depends on $f$ through a finite feature map and any proper scoring rule, and our result is obtained as a simple corollary (see Appendix \ref{ap:theoretical-proofs}).

\paragraph{Prediction} Recall the PrO posterior is obtained via (\ref{eq:fs-pro-posterior}) and the corresponding posterior predictive is given in (\ref{eq:function-mixture-predictive}). Hence, for a test input $x_\star$, posterior prediction reduces to integrating standard GP conditional formulae (\ref{eq:sn_squared}) against the optimized law $\tau_n$. This yields
\begin{align}
    P_{Q_n^{\text{PrO}}}^{x_\star}
     &= \int_{\mathbb{R}^n} \mathcal{N}(\mu_{\mbf{u}}(x_\star), s_X(x_\star, x_\star) + \sigma^2)  \tau_n(\mathrm{d} \mbf{u}).
     \label{eq:pro-predictive}
\end{align}
For a detailed derivation see Appendix \ref{ap:gp-posterior-prediction}. Notably, even with a GP prior and Gaussian observation model, the PrO-GP predictive will generally be a non-Gaussian mixture because $\tau_n$ is generally non-Gaussian.

Whilst explicit mixtures of GPs (MoGPs) \citep{tresp2000mixtures} provide an alternative route to flexible predictive distributions such as \eqref{eq:pro-predictive}, we emphasise that \PROGP{}s are conceptually distinct. 
MoGPs enrich the statistical model itself by introducing multiple latent GP components. 
In contrast, PrO-GPs retain a single GP prior and likelihood and instead modify the inferential procedure, thus requiring neither gating mechanisms \citep{rasmussen2001infinite}, an explicit number of mixture components \citep{lazaro2012overlapping}, nor separately specified GP components \citep{nguyen2014fast}. 
Consequently, non-Gaussian predictive uncertainty emerges only when supported by the data, rather than being specified a priori through an explicit mixture structure.

\subsection{Inducing-point approximation} \label{sec:inducing-feature-approximation} 
Although the exact feature reduction makes it possible to apply existing computational methods, the dimension of the optimisation problem still grows with the number of observations. 
To obtain a scalable construction, we therefore restrict the admissible function-space laws through a lower-dimensional collection of inducing features and provide the following theorem.

\begin{theorem}[Inducing-point GP restriction]
\label{thm:inducing-point-gp-restriction}
Let $\Lambda=(\lambda_1,\ldots,\lambda_m)$ be a collection of $m$ inducing points and let
$\mbf u \coloneq \mbf f_\Lambda$, with induced prior
$
    \pi_\Lambda
    =
    \mathcal N(\mbf m_\Lambda,K_{\Lambda\Lambda}).
$
For any $\tau\in\mathcal P(\mathbb R^m)$ with
$\KL(\tau\Vert\pi_\Lambda)<\infty$, define
\[
    Q^\tau(\dd f)
    \coloneq
    \int_{\mathbb R^m}
    \Pi(\dd f\mid  \mbf{u})
    \tau(\dd\mbf u),
\]
and the restricted family
\[
    \mathcal Q_\Lambda
    \coloneq
    \left\{
        Q^\tau :
        \tau\in\mathcal P(\mathbb R^m),
        \ \KL(\tau\Vert\pi_\Lambda)<\infty
    \right\}.
\]
Define $P_\tau^{x_i}$ as the conditional predictive distribution induced by $Q^\tau$:
\begin{equation}
    P_\tau^{x_i}
    \coloneq
    P_{Q^\tau}^{x_i}
    =
    \int_{\mathbb R^m}
    \left[
        \int_{\mathcal F}
        P_f^{x_i}\,
        \Pi(\dd f\mid \mbf u)
    \right]
    \tau(\dd\mbf u).
    \label{eq:inducing-mixture-predictive}
\end{equation}
Then minimising the function-space PrO-GP objective in \eqref{eq:function-pro} over the restricted family $\mathcal{Q}_{\Lambda}$ is equivalent to
\begin{equation}
    \argmin_{\tau\in\mathcal P(\mathbb{R}^m)}
    \left\{
    -\frac{\lambda_n}{n}
    \sum_{i=1}^n
    \log p_\tau(y_i \mid x_i)
    +
    \KL(\tau \Vert \pi_\Lambda)
    \right\}.
    \label{eq:inducing-pro-gp-objective}
\end{equation}
Moreover, if $\tau_{\Lambda,n}$ minimises the finite-dimensional objective, then a corresponding restricted function-space minimiser is
\[
    Q_{\Lambda,n}^{\mathrm{PrO}}(\dd f)
    =
    \int_{\mathbb R^m}
    \Pi(\dd f\mid  \mbf u)
    \tau_{\Lambda,n}(\dd\mbf u)
\]
\end{theorem}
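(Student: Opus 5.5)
The plan is to show that, on the restricted family $\mathcal Q_\Lambda$, both terms of the function-space objective \eqref{eq:function-pro} can be rewritten exactly in terms of $\tau$. The map $\tau\mapsto Q^\tau$ is a bijection onto $\mathcal Q_\Lambda$. Pushing $Q^\tau$ forward under $f\mapsto \mbf f_\Lambda$ returns $\tau$, because $\Pi(\cdot\mid\mbf u)$ is a regular conditional distribution concentrated on $\{\mbf f_\Lambda=\mbf u\}$. So minimising over $\mathcal Q_\Lambda$ is the same as minimising $\tau\mapsto \mathcal J(Q^\tau)$ over $\tau$ with $\KL(\tau\Vert\pi_\Lambda)<\infty$. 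Once $\mathcal J(Q^\tau)$ equals \eqref{eq:inducing-pro-gp-objective} for every admissible $\tau$, the two argmins correspond, and $Q^{\tau_{\Lambda,n}}$ is a restricted minimiser.

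\textbf{Step 1 (data term).} By definition \eqref{eq:inducing-mixture-predictive}, $P_{Q^\tau}^{x_i}=P_\tau^{x_i}$. This follows from Fubini/disintegration: $\int_{\mathcal F}P_f^{x_i}Q^\tau(\dd f)=\int\int P_f^{x_i}\Pi(\dd f\mid\mbf u)\tau(\dd\mbf u)$, which is justified by measurability of $f\mapsto P_f^{x_i}(B)$ and nonnegativity. For a concrete density, note that under $\Pi(\cdot\mid\mbf u)$ we have $f(x_i)\sim\mathcal N(\mu_{\mbf u}(x_i),s_\Lambda(x_i,x_i))$ by \eqref{eq:sn_squared}. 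Hence the inner integral is $\mathcal N(\mu_{\mbf u}(x_i),s_\Lambda(x_i,x_i)+\sigma^2)$, and $p_\tau(y_i\mid x_i)$ is its $\tau$-mixture. The log-score terms therefore agree identically.

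\textbf{Step 2 (KL term).} I would show $\KL(Q^\tau\Vert\Pi)=\KL(\tau\Vert\pi_\Lambda)$ using the chain rule for KL under disintegration along the measurable map $T(f)=\mbf f_\Lambda$. First, $\Pi=\int\Pi(\cdot\mid\mbf u)\pi_\Lambda(\dd\mbf u)$ and $Q^\tau=\int\Pi(\cdot\mid\mbf u)\tau(\dd\mbf u)$ share the same conditional kernel. Second, when $\tau\ll\pi_\Lambda$, one checks directly that $\dd Q^\tau/\dd\Pi(f)=(\dd\tau/\dd\pi_\Lambda)(T(f))$ by testing on sets, using the defining property of the disintegration. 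Then
\[
\KL(Q^\tau\Vert\Pi)=\int\log\frac{\dd\tau}{\dd\pi_\Lambda}(T f)\,Q^\tau(\dd f)=\int\log\frac{\dd\tau}{\dd\pi_\Lambda}(\mbf u)\,\tau(\dd\mbf u)=\KL(\tau\Vert\pi_\Lambda),
\]
where the middle equality uses $T_\#Q^\tau=\tau$. This requires $\mathcal F$ to be Polish, so that regular conditional distributions exist, which is the standing assumption in the notation.

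\textbf{Step 3 (conclusion).} Combining Steps 1 and 2 gives $\mathcal J(Q^\tau)=$\eqref{eq:inducing-pro-gp-objective}$(\tau)$, so the two problems are equivalent and minimisers transfer as claimed. The main technical obstacle is Step 2: rigorously verifying the Radon--Nikodym identity and the uniqueness of disintegration up to $\pi_\Lambda$-null sets. The data term is only a routine Gaussian computation. Alternatively, since this is presumably the same argument as Theorem~\ref{thm:exact-gp-reduction} with $X$ replaced by $\Lambda$ (except that the predictive now depends on $f$ beyond $\mbf u$, which is why only a restriction, not an exact reduction, is obtained), I would invoke the general feature-reduction lemma of the appendix restricted to $\mathcal Q_\Lambda$, if it is available.
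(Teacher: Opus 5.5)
Your proposal is correct and follows essentially the same route as the paper. The paper proves the general inducing-feature restriction (Theorem~\ref{thm:inducing}) by using the prior-conditional lift (Lemma~\ref{thm:prior-conditional}) to get $L_\#Q^\tau=\tau$ and $\KL(Q^\tau\Vert\Pi)=\KL(\tau\Vert\pi_L)$, uses Fubini to identify $P_{Q^\tau}^{x_i}$ with the mixture predictive, and then specialises to the evaluation map $f\mapsto\mbf f_\Lambda$. The only minor difference is that you get the KL identity by directly exhibiting $\dd Q^\tau/\dd\Pi=(\dd\tau/\dd\pi_\Lambda)\circ T$, whereas the paper shows $Q^\tau(\cdot\mid\mbf u)=\Pi(\cdot\mid\mbf u)$ and invokes the KL chain rule (Lemma~\ref{thm:int-prop}) so the conditional term vanishes, which is the same computation in different packaging.
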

This approximation is in the spirit of inducing-point methods: rather than optimising over the full space of posterior laws, we restrict attention to the family $\mathcal{Q}_\Lambda$ in which only the law of a low-dimensional feature vector $\mbf{u}$ is free to vary whereas the conditional law of $f$ given $\mbf{u}$ is held fixed to its prior value. This reduces the original optimization problem in (\ref{eq:exact-pro-gp-objective}) to one over $\mathcal{P}(\mathbb{R}^m)$ with $m \ll n$. Once again, this result is obtained as a corollary of a more general result, this time for measurable feature maps (see Appendix \ref{ap:theoretical-proofs}).

We note that the inducing point objective is closely related to the PPGPR objective introduced in \cite{jankowiak2020parametric}.
However, whereas PPGPR relies on a Gaussian approximation to $\tau$, we place \emph{no} parametric restriction on this distribution. This additional flexibility is fundamental: it allows PrO-GPs to represent, for example, multimodal, asymmetric, or otherwise highly non-Gaussian distributions over latent functions, and hence to retain multiple distinct explanations of the observed data simultaneously. Consequently, the induced predictive distribution can capture forms of heterogeneity and multimodality inaccessible to a Gaussian mixing law, providing substantially greater capacity to adapt under model misspecification in comparison to PPGPR. See Appendix \ref{ap:ppgr} for a proof that PPGPR is a Gaussian restriction to $\tau$.

\paragraph{Prediction} Conditional on the inducing values $\mbf{u}$, the GP prior is $ \Pi(\dd f\mid \mbf{f}_\Lambda =\mbf{u}) \sim \mathcal{GP}(a_{\mbf{u}},r_\Lambda)$, where 
\begin{align}
a_\mbf{u}(x) &= m(x)
    +
    \mathbf k_{x \Lambda}^\top K_{\Lambda\Lambda}^{-1}
    (\mbf{u} -\mathbf m_{\Lambda}) \notag \\
    r_\Lambda(x, x')
    &=
    k(x, x')
    -
    \mathbf k_{x \Lambda}^\top K_{\Lambda\Lambda}^{-1}\mathbf k_{\Lambda x'}.
    \label{eq:r_lambda}
\end{align}
Hence, for a candidate law $\tau \in \mathcal{P}(\mathbb{R}^m)$
\begin{align*}
    P_{\tau}^{x_\star} 
&=
\int_{\mathbb R^m}
\mathcal N\!\left(
a_{\mbf{u}}(x_\star),
r_\Lambda(x_\star,x_\star)+\sigma^2
\right)
\tau(\mathrm d\mbf{u}) \\
    p_\tau(y \mid x_\star) &=\int_{\mathbb{R}^m} \varphi_{v}(y - a_{\mbf{u}}(x_\star)) \tau( \dd \mbf{u}),
\end{align*}
where $\varphi_{v}$ denotes the density of a centred Gaussian with variance $v = \sigma^2 + r_\Lambda(x_\star, x_\star)$. To obtain the posterior predictive $P_{Q_n^{\text{PrO}}}^{x_\star}$, simply replace the candidate law $\tau$ by the optimised law $\tau_{\Lambda,n}$.
\subsection{Computation}
\label{sec:computation}
Both the exact objective \eqref{eq:exact-pro-gp-objective} and its inducing approximation \eqref{eq:inducing-pro-gp-objective} are optimisation problems over probability measures on the Euclidean spaces, $\mathbb R^n$ and $\mathbb R^m$ respectively.
Both these objectives have the form
\begin{equation}
\label{eq:common-finite-dim-opt}
\mathcal{J}[\tau] \coloneq-\frac{\lambda_n}{n}\sum_{i=1}^n\log\int g_i(\mbf{u})\,\tau(\dd\mbf{u})+\KL(\tau\Vert\pi)
\end{equation}
where $\pi\in\{\pi_X,\pi_\Lambda\}$, and $g_i(\mbf{u})$ is the relevant Gaussian density.
For computation, introduce a collection of $K$ particles $\mathbf{U}^{1:K} =( \mathbf{U}^1, \ldots ,\mathbf{U}^K )$ with $\mathbf{U}^k \in \mathbb{R}^n$ or $\mathbb{R}^m$.
Define
\begin{equation}
    \tau_{n,K}(\dd\mbf{U}^{1:K})
    \propto
    \prod_{k=1}^K\pi(\dd\mbf{U}^k)
    \prod_{i=1}^n
    \Big\{
    \frac{1}{K}\sum_{k=1}^K g_i(\mbf{U}^k)
    \Big\}^{\frac{K\lambda_n}{n}}.
    \label{eq:replica-gibbs-target}
    \vspace{2mm}
\end{equation}
Finally, write $\bar{\tau}_{n,K}(\dd\mbf{u})\coloneq\int \tau_{n,K}(\dd\mbf{u},\dd\mbf{U}^{2:K})$ for the common one-marginal of $\tau_{n,K}$.
Under mild assumptions, $\bar{\tau}_{n,K}\rightarrow\tau_n$ as $K\rightarrow\infty$, where $\tau_n:=\argmin \mathcal{J}[\tau]$ \citep{mclatchie2026markov}.
That is, the marginals of \eqref{eq:replica-gibbs-target} converge to the minimiser of \eqref{eq:common-finite-dim-opt} as $K\to\infty$.
Thus, approximate samples from the minimising law can be obtained by sampling $\tau_{n,K}$, and treating the particles $\mbf{U}^1,\ldots,\mbf{U}^K$ as approximate draws.

Due to its product structure, it is straightforward to sample $\tau_{n,K}$ using a blocked Gibbs sampler.
Suppose now that $r:=K\lambda_n/n\in\mathbb{N}$. 
By the multinomial theorem, each $r$th power can be represented by allocating $r$ copies of observation $i$ among the $K$ particles.
Let $C_{ik}$ denote the number allocated to particle $k$, so $\sum_k C_{ik}=r$.
Conditional on the particles, these counts are independent over observations and satisfy
\begin{equation}
(C_{i1},\ldots,C_{iK})\mid\mbf{U}^{1:K}
\sim
\operatorname{Mult}(r;\rho_{i1},\ldots,\rho_{iK}), \label{eq:replica-allocations}
\end{equation}
where $\smash{\rho_{ik}={g_i(\mbf{U}^k)}/{\sum_{\ell=1}^K g_i(\mbf{U}^{\ell})}}$.
Write $\mbf C=(C_{ik})_{i,k}$ and let $\tau_{n,K,r}$ denote the augmented law of $(\mbf U^{1:K},\mbf C)$, whose marginal over $\mbf U^{1:K}$ is $\tau_{n,K}$.
Conditional on all counts, the particles are independent, with
\begin{equation}
\tau_{n,K,r}(\dd\mbf{U}^k\mid\mbf{C})
\propto
\pi(\dd\mbf{U}^k)
\prod_{i=1}^n g_i(\mbf{U}^k)^{C_{ik}}.
\label{eq:replica-conditionals}
\end{equation}
These conditionals can be sampled exactly.
To see this, write $\pi=\mathcal N(\mbf m_\pi,K_\pi)$ and
$g_i(\mbf u)=\varphi_{v_i}(y_i-b_i-\mbf h_i^\top\mbf u)$.
In the exact construction, $(K_\pi,b_i,\mbf h_i,v_i)=(K_{XX},0,\mbf e_i,\sigma^2)$; in the inducing construction, $K_\pi = K_{\Lambda\Lambda}$, $\mbf h_i=K_{\Lambda\Lambda}^{-1}\mbf k_{\Lambda x_i}$, $b_i=m(x_i)-\mbf h_i^\top\mbf m_\Lambda$, and
$v_i=r_\Lambda(x_i,x_i)+\sigma^2$.
Then \eqref{eq:replica-conditionals} is $\mathcal N(\mbf m_k^\star,\Sigma_k)$, where
\begin{align}
\Sigma_k^{-1}
&=
K_\pi^{-1}
+
\sum_{i=1}^n\frac{C_{ik}}{v_i}\mbf h_i\mbf h_i^\top,
\notag\\[-1pt]
\mbf m_k^\star
&=
\Sigma_k\left\{
K_\pi^{-1}\mbf m_\pi
+
\sum_{i=1}^n
\frac{C_{ik}(y_i-b_i)}{v_i}\mbf h_i
\right\}.
\label{eq:replica-gaussian-update}
\end{align}

A complete sweep alternates the allocation update \eqref{eq:replica-allocations} with the Gaussian particle updates \eqref{eq:replica-gaussian-update}.
The particles are interchangeable copies, or \emph{replicas}, of the feature vector.
We thus refer to this blocked Gibbs procedure as \emph{Replica--Gibbs}.
The allocation step couples the replicas through their relative predictive fit, while the Gaussian step allows different replicas to specialise to different explanations of the data.
After burn-in, the empirical measures from retained sweeps are averaged in the predictive mixture.
The algorithm is exact for \eqref{eq:replica-gibbs-target} and has no step-size or time-discretisation error. The precise algorithm and its computational complexity are provided in Appendix \ref{ap:computational-complexity}. An alternative sampling scheme, based on the Wasserstein Gradient Flow \citep{ambrosio2005gradient}, is presented in Appendix \ref{ap:wgf-sampling}.

\subsection{Hyperparameter Optimisation}
\label{sec:hyperparam-optimisation}

Thus far, we have treated the PrO-GP kernel hyperparameters $\gamma$ as fixed.
This is natural when PrO inference is used to robustify an \emph{existing} GP model: holding $\gamma$ fixed isolates the additional predictive flexibility arising from the PrO posterior, rather than conflating it with changes to the underlying model specification.
Accordingly, we treat $\gamma$ as an external, plug-in quantity. 
It may be specified by expert elicitation, leave-one-out predictive performance, or by maximising the marginal log-likelihood of the underlying GP. 
An alternative approach is to treat PrO-GPs as a standalone model class, freely optimising $\gamma$ by minimising the predictive score in \eqref{eq:common-finite-dim-opt} via gradient steps.
Detailed guidance is provided in Appendix \ref{ap:kernel-hyperparameters}.

The observation variance $\sigma^2$ plays a different role. Consider the PrO predictive variance decomposition at $x_\star$: $
\operatorname{Var}(Y_\star\mid x_\star,\mathcal{D}_n)
    =
    \sigma^2
    +
    s_X(x_\star,x_\star)
    +
    \operatorname{Var}_{\mbf{U}\sim\tau_n}
    \bigl[\mu_{\mbf{U}}(x_\star)\bigr].
$
Unlike in standard GPs, predictive variation can therefore be represented both through the observation model and through the mixing law $\tau_n$. Thus, fixing $\sigma^2$ at an inappropriate value would confound the predictive variance represented by $\tau_n$ and restrict the resulting predictive distribution. Also note that jointly optimising $\sigma^2$ and $\tau$ on the same data would allow the model to interpolate the training points, driving the observation variance to the degenerate solution $\sigma \to 0$ (see Appendix \ref{ap:degenerate-sigma}).
Therefore, we partition the data $\mathcal{D}_n$ into disjoint subsets $\mathcal{D}_n = \mathcal{D}_{\text{train}} \cup \mathcal{D}_{\text{val}}$.
We then alternate between updating $\tau$ according to Section \ref{sec:computation}, with $\sigma^2$ fixed, and updating $\sigma^2$ by minimising the held-out negative log predictive density on $\mathcal{D}_{\text{val}}$, with $\tau$ fixed.

Lastly, we consider the score-weighting term $\lambda_n$. To isolate the difference in predictive performance between PrO-GPs and Bayesian GPs, we fix $\lambda_n / n =1$ so that both posteriors have the same prior-to-likelihood weighting.
See Appendix \ref{ap:lambda_n_discussion} for detailed discussion.

\section{EXPERIMENTS} \label{sec:Experiments}

In this section, we evaluate the empirical performance of the model variants presented in Table \ref{table:acronyms}. Throughout, we evaluate via the mean negative log predictive density (NLPD) on held-out test data (see Appendix \ref{ap:nlpd} for the definitions for \GP{} and \PROGP{}).  First, we analyse the robustness of \PROGP{}s compared to \GP{}s under synthetic misspecification in Section \ref{sec:synthetic}. Second, to demonstrate the practical benefits of \PROGP{}s, we apply them to the task of probabilistic interpolation of traffic flow on a road network in Section \ref{sec:traffic-flow}. Lastly, we perform an extensive evaluation of \PROGP{}s on UCI regression benchmarks in Section \ref{sec:uci-regression}.

\begin{table}[h!]
\vspace{-0.5em}
\caption{Model variants}
\vspace{0.5em}
\centering
\begin{tabular}{lp{0.68\columnwidth}}
\hline
\textbf{Model} & \textbf{Description} \\
\hline
\GP{}    & Bayesian GP fit with marginal log-likelihood maximisation \\
\PROGP & PrO-GP with $\sigma^2$ estimated as in Section \ref{sec:hyperparam-optimisation}. $\gamma$ is identical to the \GP{}. \\
\SVGP & Sparse Variational GP. \\
\PPGPR & \SVGP{} optimised with PPGPR. \\
\IPROGP\textsuperscript{\scriptsize{S}/\scriptsize{P}} &\PROGP{} with inducing point approximation. Inducing point locations and $\gamma$ are identical to the ({\scriptsize{S}})  \SVGP{} or the ({\scriptsize{P}}) \PPGPR{} model. \\
\hline
\end{tabular}
\setlength{\belowcaptionskip}{2pt}
\label{table:acronyms}
\vspace{-0.5em}
\end{table}

\begin{figure*}[h!]
    \centering
    \includegraphics[width=1.0\textwidth]{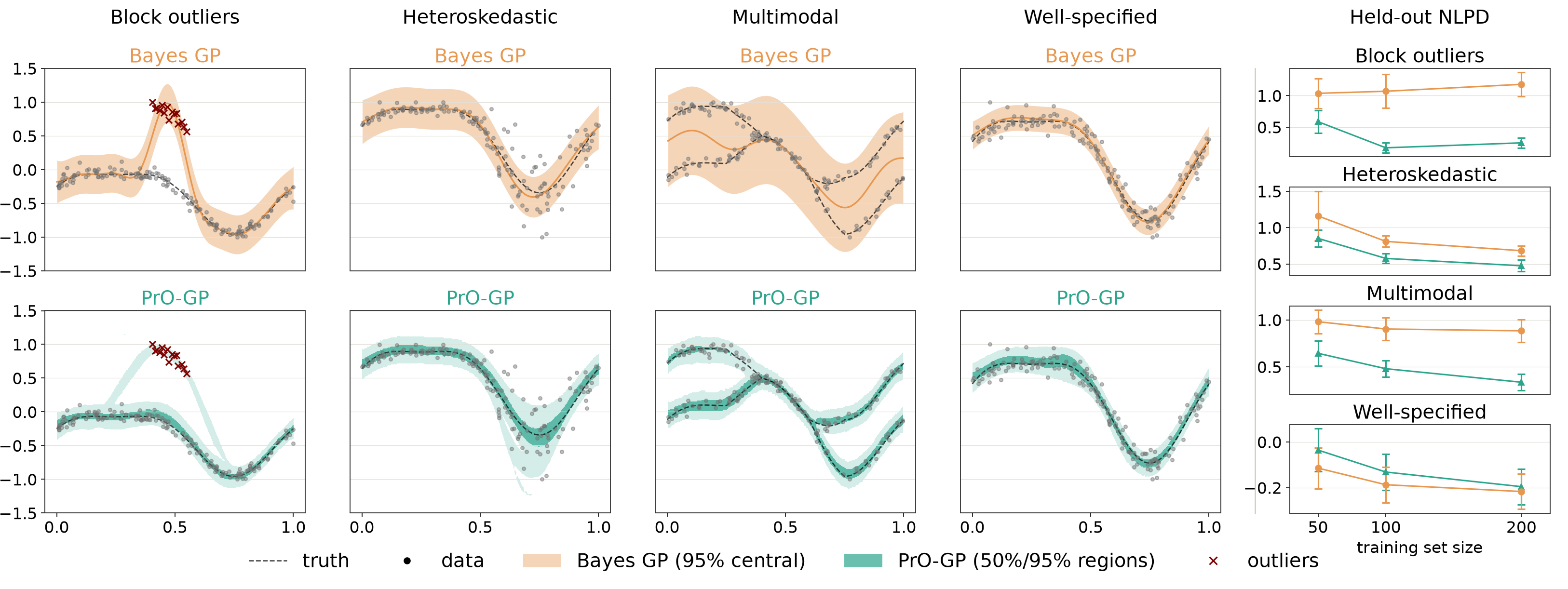}
    \vspace*{-6mm}
    \caption{\textit{Left}: Illustrative \GP{} and \PROGP{} predictive densities on four synthetic regression tasks. \textit{Right:} Mean NLPD (lower is better) and 95\% credible interval over increasing training set size, randomised over twenty DGPs.}
    \label{fig:synthetic-results-1}
\end{figure*}

\subsection{Synthetic misspecification}
\label{sec:synthetic}

The form of the \PROGP{} posterior predictive in \eqref{eq:pro-predictive} suggests it will outperform the Bayesian \GP{} posterior whenever a single Gaussian, conditional on the latent function mean, is insufficient to describe the predictive distribution. In Figure \ref{fig:synthetic-results-1}, we verify this observation, showing that the \PROGP{} predictive yields lower mean NLPD under block outliers, heteroskedasticity, and multimodality. The reader is referred to Appendix \ref{ap:synthetic} for the exact experimental setup. We describe our observations in each case below.

\textit{Block outliers:} \GP{}s sharply fit the outliers, causing underestimation of the ground truth and becoming increasingly overconfident as dataset size increases. In contrast, \PROGP{}s represent a mixture of scenarios, whereby the latent function may or may not follow the trend implied by the outlying observations. \textit{Heteroskedastic:} \GP{}s inflate observation variance across the whole domain to account for heteroskedasticity, whereas \PROGP{}s can leverage additional functional uncertainty to represent the heterogeneous regions. \textit{Multimodal:} \GP{}s compromise by placing most of the predictive density between the bifurcation, describing neither mode accurately.  \PROGP{}s can represent this uncertainty directly via $\tau$. \textit{Well-specified:} as dataset size increases, the performance of \PROGP{}s converges to \GP{}s in the well-specified case, wherein a single, correctly specified function describes the DGP. If the model is well-specified a standard \GP{} is preferable, but identifying the precise form of misspecification and correcting it can be challenging in practice.

\subsection{Probabilistic interpolation of traffic flow}
\label{sec:traffic-flow}

We demonstrate the benefits of \PROGP{}s as a robustification tool for regression problems where defining a well-specified prior is intrinsically challenging. We use a dataset of traffic flow on the San Jose freeway sensor network \citep{chen2001freeway}, following the setup of \citet{borovitskiy2021matern} (see Appendix \ref{ap:real-examples-1}), who introduce a graph Mat\'{e}rn kernel to parameterise a \GP{} prior on the space of undirected graphs, resulting in graph \GP{}s. We fit graph \GP{}s and graph \PROGP{}s, using an increasing number of sensors for training. The results are presented in Table \ref{tab:graph-gp} and visualisation of the model predictive standard deviations overlaid onto the road network is provided in Figure \ref{fig:road-std}.

\begin{table}[h!]
\centering
\setlength{\tabcolsep}{6pt}
\renewcommand{\arraystretch}{0.95}
\caption{Graph regression NLPD ($\pm 1$ SE).}
\label{tab:graph-gp}
\begin{tabular}{@{}rccc@{}}
\toprule
Sensors & Graph \GP{} & Graph \PROGP{} & $\Delta_{\text{NLPD}}$ \\
\midrule
200 & 1.13 $\pm$ 0.02 & 1.00 $\pm$ 0.02 & 0.13 \\
225 & 1.13 $\pm$ 0.02 & 0.98 $\pm$ 0.03 & 0.15 \\
250 & 1.12 $\pm$ 0.03 & 0.93 $\pm$ 0.04 & 0.19 \\
275 & 1.11 $\pm$ 0.03 & 0.89 $\pm$ 0.04 & 0.22 \\
\bottomrule
\end{tabular}
\end{table}

Graph \PROGP{} substantially improves mean NLPD and yields better calibrated predictive distributions compared to Graph \GP{}. Furthermore, the difference in mean NLPD increases as the number of training sensors increases. This suggests the graph \GP{} is non-trivially misspecified and is qualitatively consistent with finite-dimensional PrO theory: as \(n\) grows, PrO posteriors approach the predictively optimal model average, whereas the Bayesian posterior concentrates around the single best-fitting function and predictive performance plateaus. 

Interestingly, whilst the predictive means are almost identical (see Figure \ref{fig:road_grid}, Appendix \ref{ap:real-examples-1}), observe in Figure \ref{fig:road-std} that the predictive standard deviations are completely different. Apart from locations with sparse measurements (e.g. bottom right), Graph \GP{}'s predictive standard deviation is uniform. In contrast, Graph \PROGP{} predicts significantly more variability, even in regions with more training data. We hypothesise this reflects the underlying variability in the traffic and congestion patterns in different roads, which Graph \GP{} cannot represent. In accordance with \citet{liu2025detecting}, we find that PrO posteriors are a useful tool for both detecting and mitigating misspecification.

\begin{figure}[h!]
\centering
\includegraphics[width=0.5\textwidth]{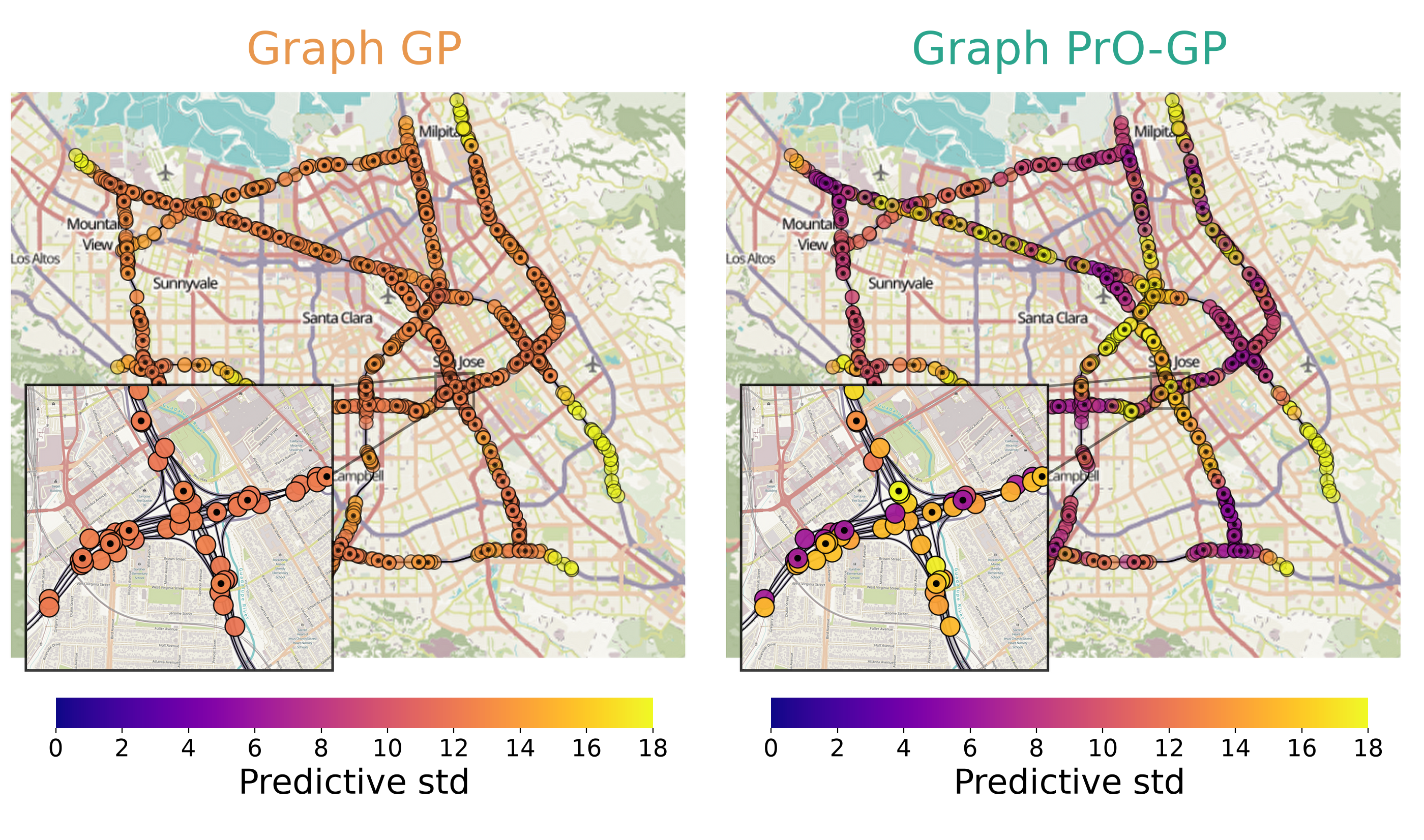}
    \vspace{-7mm}
    \caption{Graph \GP{} and Graph \PROGP{} posterior predictive standard deviations for the road network. Circles are sensor nodes, black dots are training nodes. Zoomed in predictions are in the bottom left.}
    \label{fig:road-std}
\end{figure}

\begin{table*}[t]
\vspace{-2mm}
\centering
\small
\setlength{\tabcolsep}{2.0pt}
\renewcommand{\arraystretch}{1.0}
\caption{Mean NLPD ($\pm 1$ SE) across 16 UCI regression datasets with 10-fold random splits. \textit{Exact}: \GP{} and \PROGP{} evaluated on datasets with no inducing point approximation. \textit{Sparse}: \SVGP{}, \PPGPR{}, \SIPROGP{} and \PIPROGP{} evaluated on datasets with inducing point approximation. Recall here that \SIPROGP{} and \PIPROGP{} are inducing point approximations of \PROGP{} with kernel-hyperparameters and inducing point locations given by those of \SVGP{} and \PPGPR{}, respectively. See~Table \ref{table:acronyms} for full details. }
\vspace{0.5em}
\label{table:uci-results}

\begin{tabular}{@{}clcccccccc@{}}
\toprule

\multirow{3}{*}{\rotatebox[origin=c]{90}{\textit{Exact}}}
& 
& \shortstack{concreteslump\\{\scriptsize $n=103$}}
& \shortstack{autos\\{\scriptsize $n=159$}}
& \shortstack{machine\\{\scriptsize $n=209$}}
& \shortstack{autompg\\{\scriptsize $n=392$}}
& \shortstack{housing\\{\scriptsize $n=506$}}
& \shortstack{stock\\{\scriptsize $n=536$}}
& \shortstack{energy\\{\scriptsize $n=768$}} 
& \shortstack{concrete\\{\scriptsize $n=1030$}} \\
\cmidrule(l){2-10}

& \GP{}
& -1.30 (0.17)
& 0.50 (0.10)
& \textbf{0.61 (0.05)}
& 0.44 (0.07)
& 0.33 (0.06)
& \textbf{0.28 (0.03)}
& -1.68 (0.01)
& 0.18 (0.03)
 \\

& \PROGP{}
& \textbf{-1.65 (0.08)}
& \textbf{0.41 (0.07)}
& \textbf{0.61 (0.05)}
& \textbf{0.30 (0.06)}
& \textbf{0.25 (0.06)}
& \textbf{0.28 (0.03)}
& \textbf{-1.85 (0.02)}
& \textbf{0.03 (0.03)} \\

\midrule

\multirow{5}{*}{\rotatebox[origin=c]{90}{\textit{Sparse}}}
& 
& \shortstack{wine\\{\scriptsize $n=1599$}}
& \shortstack{skillcraft\\{\scriptsize $n=3338$}}
& \shortstack{abalone\\{\scriptsize $n=4177$}}
& \shortstack{parkinsons\\{\scriptsize $n=5875$}}
& \shortstack{airquality\\{\scriptsize $n=6941$}}
& \shortstack{elevators\\{\scriptsize $n=16599$}}
& \shortstack{kin40k\\{\scriptsize $n=40000$}}
& \shortstack{protein\\{\scriptsize $n=45730$}} \\
\cmidrule(l){2-10}

& \SVGP{}
& 0.61 (0.02)
& 0.98 (0.01)
& 1.00 (0.02)
& -2.01 (0.01)
& 0.09 (0.02)
& 0.45 (0.00)
& -0.21 (0.00)
& 1.06 (0.00) \\

& \SIPROGP{}
& \textbf{0.51 (0.03)}
& \textbf{0.95 (0.01)}
& \textbf{0.81 (0.02)}
& \textbf{-2.63 (0.01)}
& -0.28 (0.02)
& \textbf{0.32 (0.00)}
& -0.61 (0.00)
& \textbf{0.64 (0.00)} \\

& \PPGPR{}
& 0.59 (0.03)
& 1.07 (0.02)
& 0.93 (0.03)
& -2.24 (0.03)
& -0.26 (0.03)
& 0.35 (0.01)
& -0.59 (0.00)
& 0.81 (0.00) \\

& \PIPROGP{}
& \textbf{0.51 (0.03)}
& 1.01 (0.02)
& 0.82 (0.02)
& -2.52 (0.02)
& \textbf{-0.31 (0.02)}
& \textbf{0.32 (0.00)}
& \textbf{-0.72 (0.00)}
& 0.65 (0.00) \\
\bottomrule
\end{tabular}
\end{table*}

\subsection{UCI regression}
\label{sec:uci-regression}
\vspace{-1mm}
We finally evaluate \PROGP{}s on a collection of regression datasets from the UCI repository \citep{kelly2026uci}.
In particular, we compare the predictive performance of \PROGP{} and its inducing variants \SIPROGP{} and \PIPROGP{} (see Table~\ref{table:acronyms}), against \GP{}, \SVGP{} \citep{titsias2009variational, hensman2013gaussian} and \PPGPR{} \citep{jankowiak2020parametric}. 
All models use an RBF kernel with independent lengthscales per dimension; see Appendix \ref{ap:uci-regression} for full experimental details. 
The results are presented in Table \ref{table:uci-results}. 

\textit{Exact:} \PROGP{}s consistently reduce the mean NLPD compared to \GP{}s. This is unsurprising: applying a generic kernel and likelihood uniformly across diverse datasets is likely to introduce model misspecification. For certain datasets (\textit{machine} and \textit{stock}), we find that the NLPD for the \GP{} and the \PROGP{} is comparable. Further investigation reveals that in these instances the base \GP{} is reasonably well-specified, and thus the \PROGP{} is not expected to offer an advantage; see Appendix \ref{ap:uci-regression} for supporting analysis. \textit{Sparse}: \IPROGP{}s reduce mean NLPD across all of the considered datasets, regardless of whether hyperparameters were obtained from maximising the \SVGP{} or the \PPGPR{} objective. The consistent performance gap between \PPGPR{} and \PIPROGP{} is particularly notable, suggesting that the non-Gaussianity of the measure $\tau$ is genuinely beneficial for improving predictive calibration.

We conclude that the PrO inferential procedure itself, rather than hyperparameter configuration, can substantially improve predictive uncertainty estimation, particularly in datasets with challenging structure (e.g. \textit{parkinsons, protein}). Although enriching the kernel or likelihood can also help in this regard, \PROGP{}s provide a complementary approach: they retain the underlying GP model class, but allow greater flexibility to adapt to model misspecification compared to the Bayesian posterior. Richer model specifications and PrO inference are therefore not mutually exclusive.

Our primary comparisons above use single-GP baselines, which allow us to isolate the effect of PrO inference, whilst holding the underlying model class fixed.
Explicit mixtures of GPs provide a related but conceptually distinct route to richer predictive distributions by enlarging the statistical model itself; see Section~\ref{sec:exact-feature-reduction}.
For completeness, we include an additional comparison against such models in Appendix~\ref{ap:mixture-comparison}.

\section{CONCLUSION}

Model misspecification in GPs can result in significantly miscalibrated predictive distributions. We introduce PrO-GPs as a natural solution to this problem. We develop the necessary theoretical and computational machinery to make inference for PrO-GPs tractable, and demonstrate through extensive experiments that they improve predictive performance under model misspecification.
Naturally, this improved robustness comes at the cost of additional computational effort. Crucially, however, no additional \emph{modelling} effort is required. Thus, inference remains interpretable with respect to the underlying model. When a practitioner suspects their model is misspecified, therefore, a PrO-GP is a natural approach to robustification.

There are several interesting directions for future work. Readily, we could explore the practical utility of PrO-GPs under alternative scoring rules (see Appendix \ref{ap:theoretical-proofs}) and observation models, such as the logistic likelihood for GP classification \citep{williams1998bayesian}. 
More broadly, it would be of interest to extend this formulation to Multi-Output GPs \citep{bonilla2007multi}, Deep GPs \citep{damianou2013deep} and Spatio-Temporal GPs \citep{sarkka2013spatiotemporal}, to name just a few.

\newpage

\subsection*{AI use statement}

In this work, we used generative AI tools to generate synthetic data sets, assist in the writing of proofs, implement methods, and clean and reformat datasets. We have not used generative AI tools to help develop theoretical models or conceptual frameworks, formulate mathematical claims, provide critical ingredients for proving mathematical claims, propose or refine hypotheses, design or provide feedback on research methodology or experiments, support qualitative and thematic data analysis, or interpret results. Assistance with translation is not applicable to this work. We have reviewed all AI-assisted work. We take responsibility for the final content of this work, including text, claims, or artifacts produced with the aid of generative AI.

\subsection*{Acknowledgements}
CL was funded by the UK Engineering and Physical Sciences Research Council (EPSRC) grant
EP/Y034767/1.


\bibliography{references}

\clearpage
\appendix
\thispagestyle{empty}

\onecolumn
\aistatstitle{Supplementary Materials}

\section{NOTATION}
\label{ap:notation}

\vspace{-10mm}
\printunsrtglossary[type=symbols, title={}]

\newpage

\section{THEORETICAL PROOFS}
\label{ap:theoretical-proofs}

In this section we prove Theorems \ref{thm:exact-gp-reduction} and \ref{thm:inducing-point-gp-restriction}. As mentioned in Section \ref{sec:Method}, these are in fact simple corollaries of much more general results for nonparametric models which satisfy a certain feature representation assumption. Hence, we proceed by introducing the setting and theorems in full generality, before proving them in Appendices \ref{ap:proof-theorem-1} and \ref{ap:proof-theorem-2}, and specialising to GP regression under the log-score in Appendix \ref{ap:proof-gps}.

\subsection{Feature reduction for nonparametric models} \label{ap:feature-reduction}

Henceforth, let $\mathcal F$ be a Polish function space and let $\Pi\in\mathcal P(\mathcal F)$ denote a prior measure over functions. For each $f\in\mathcal F$ and $x\in\mathcal X$, let $P_f^x\in\mathcal P(\mathcal Y)$ be the predictive law of the model. Furthermore, let $S: \mathcal{P}(\mathcal{Y}) \times \mathcal{Y} \to \mathbb{R} \cup \{\infty \}$ be a proper scoring rule \citep{gneiting2007strictly}. Define the function-space PrO objective in full generality

\begin{equation}
Q_n^{\mathrm{PrO}}
\in
\argmin_{Q\in\mathcal P(\mathcal F)} \mathcal{J}_{S}^\Pi[Q], \qquad \mathcal{J}_{S}^\Pi[Q] \coloneq
\frac{\lambda_n}{n}
\sum_{i=1}^n
S(P_Q^{x_i},y_i)
+
\KL(Q\Vert\Pi).
\label{eq:full-function-pro}
\end{equation}

We state the following assumption in which the  predictive law only depends on $f \in \mathcal{F}$ through a feature map.

\begin{assumption}[Feature factorisation]
\label{as:feature-factorisation}
There exists $d<\infty$, a measurable feature map
$\phi:\mathcal F\to\mathbb R^d$, and, for each $i=1,\ldots,n$, a family of predictive laws
$\{\bar P_{\mbf{u}}^{x_i}:\mbf{u}\in\mathbb R^d\}\subseteq\mathcal P(\mathcal Y)$ such that, for every $f\in\mathcal F$,
\[
P_f^{x_i}
=
\bar P_{\phi(f)}^{x_i},
\qquad
i=1,\ldots,n
\]
\end{assumption}

Under this assumption, we provide the following result.

\begin{theorem}[Exact feature reduction]
\label{thm:exact-feature}
Suppose Assumption \ref{as:feature-factorisation} holds and define
\(
\pi\coloneqq\phi_\#\Pi.
\)
For any $\tau\in\mathcal P(\mathbb R^d)$ and $i=1,\ldots,n$ let
\[
\bar P_\tau^{x_i}
\coloneqq
\int_{\mathbb R^d}
\bar P_{\mbf{u}}^{x_i}\,
\tau(\mathrm d\mbf{u}).
\]
Then minimising the function-space PrO objective in
\eqref{eq:full-function-pro} over
$Q\in\mathcal P(\mathcal F)$ is equivalent to
\begin{equation}
    \tau_n\in
    \argmin_{\tau\in\mathcal P(\mathbb R^d)}
    \left\{
        \frac{\lambda_n}{n}
        \sum_{i=1}^n S(\bar P_\tau^{x_i},y_i)
        +
        \text{KL}(\tau\Vert\pi)
    \right\}.
    \label{eq:fs-opt}
\end{equation}
For every minimiser $\tau_n$ with
$\text{KL}(\tau_n\Vert\pi)<\infty$, the corresponding
function-space minimiser is
\begin{equation*}
    Q_n^{\text{PrO}}(\mathrm df)
    =
    \int_{\mathbb R^d}
    \Pi(\mathrm df\mid \mbf{u})\,
    \tau_n(\mathrm d \mbf{u}).
\end{equation*}
Conversely, every finite-objective function-space minimiser is of
this form.
\end{theorem}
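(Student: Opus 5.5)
The argument rests on the chain rule for KL divergence, applied through disintegration along the feature map $\phi$. Since $\mathcal F$ is Polish and $\phi$ is measurable, the prior disintegrates as $\Pi(\dd f)=\int_{\mathbb R^d}\Pi(\dd f\mid\mbf u)\,\pi(\dd\mbf u)$, with $\Pi(\cdot\mid\mbf u)$ concentrated on $\phi^{-1}(\mbf u)$ for $\pi$-a.e.\ $\mbf u$. Likewise, any $Q\in\mathcal P(\mathcal F)$ disintegrates as $Q(\dd f)=\int Q(\dd f\mid\mbf u)\,\tau_Q(\dd\mbf u)$, where $\tau_Q\coloneqq\phi_\#Q$.

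\emph{Step 1: the score depends on $Q$ only through $\tau_Q$.} By Assumption~\ref{as:feature-factorisation} and the change of variables formula,
\[
P_Q^{x_i}=\int_{\mathcal F}\bar P^{x_i}_{\phi(f)}\,Q(\dd f)=\int_{\mathbb R^d}\bar P^{x_i}_{\mbf u}\,\tau_Q(\dd\mbf u)=\bar P^{x_i}_{\tau_Q}.
\]
This is an identity of measures on $\mathcal Y$, checked setwise using measurability of $\mbf u\mapsto\bar P^{x_i}_{\mbf u}(A)$, which is implicit in the definition of $\bar P_\tau^{x_i}$. Hence $S(P_Q^{x_i},y_i)=S(\bar P_{\tau_Q}^{x_i},y_i)$.

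\emph{Step 2: KL chain rule.} If $\KL(Q\Vert\Pi)<\infty$, then $Q\ll\Pi$ and
\[
\KL(Q\Vert\Pi)=\KL(\tau_Q\Vert\pi)+\int\KL\bigl(Q(\cdot\mid\mbf u)\Vert\Pi(\cdot\mid\mbf u)\bigr)\,\tau_Q(\dd\mbf u).
\]
This holds in general, as $+\infty$ on both sides whenever either side is infinite. It follows that $\KL(Q\Vert\Pi)\ge\KL(\tau_Q\Vert\pi)$, with equality if and only if $Q(\cdot\mid\mbf u)=\Pi(\cdot\mid\mbf u)$ for $\tau_Q$-a.e.\ $\mbf u$. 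Combined with Step 1, this gives
\[
\mathcal J^\Pi_S[Q]\ge\bar{\mathcal J}[\tau_Q],
\]
where $\bar{\mathcal J}$ denotes the objective in \eqref{eq:fs-opt}, with equality exactly when $Q=Q^{\tau_Q}\coloneqq\int\Pi(\cdot\mid\mbf u)\,\tau_Q(\dd\mbf u)$.

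\emph{Step 3: the lift.} Conversely, take $\tau$ with $\KL(\tau\Vert\pi)<\infty$ and set $Q^\tau=\int\Pi(\cdot\mid\mbf u)\,\tau(\dd\mbf u)$. Because $\Pi(\cdot\mid\mbf u)$ is supported on $\phi^{-1}(\mbf u)$, we have $\phi_\#Q^\tau=\tau$. Moreover $Q^\tau$ has density $\frac{\dd\tau}{\dd\pi}\circ\phi$ with respect to $\Pi$, so $\KL(Q^\tau\Vert\Pi)=\KL(\tau\Vert\pi)$. Thus $\mathcal J^\Pi_S[Q^\tau]=\bar{\mathcal J}[\tau]$. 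If $\KL(\tau\Vert\pi)=\infty$, then $\bar{\mathcal J}[\tau]=\infty$, assuming $S>-\infty$ so the objective is well defined; such $\tau$ are irrelevant whenever a finite value exists.

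\emph{Step 4: conclusion.} Steps 2 and 3 give $\inf_Q\mathcal J^\Pi_S=\inf_\tau\bar{\mathcal J}$. If $\tau_n$ attains the finite-dimensional infimum, then $Q^{\tau_n}$ attains the function-space infimum. Conversely, if $Q$ is a finite-objective minimiser, then
\[
\bar{\mathcal J}[\tau_Q]\le\mathcal J^\Pi_S[Q]=\inf\bar{\mathcal J},
\]
so $\tau_Q$ is a minimiser. Equality must then hold in Step 2, which forces $Q=Q^{\tau_Q}$. This is the claimed form, and it gives the stated equivalence.

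The main obstacle is the measure-theoretic justification of Step 2: the existence of regular conditional distributions, where the Polish assumption on $\mathcal F$ is used, and the chain rule with its equality case. I would prove the chain rule by writing
\[
\frac{\dd Q}{\dd\Pi}(f)=\frac{\dd\tau_Q}{\dd\pi}(\phi(f))\cdot\frac{\dd Q(\cdot\mid\phi(f))}{\dd\Pi(\cdot\mid\phi(f))}(f)
\]
and integrating $\log$ against $Q$. Care is needed with integrability, which I would handle by splitting into positive and negative parts or by citing the standard result for disintegrations. The other steps are routine bookkeeping.
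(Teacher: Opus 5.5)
Your proposal is correct and follows essentially the same route as the paper. Both arguments use the same four ingredients: the score depends on $Q$ only through $\phi_\#Q$; the KL chain rule along the disintegration (the paper's Lemma~\ref{thm:int-prop}, proved by factorising $\dd Q/\dd\Pi$ into $\frac{\dd\tau_Q}{\dd\pi}\circ\phi$ times the conditional density, exactly as you sketch); the prior-conditional lift (Lemma~\ref{thm:prior-conditional}); and the vanishing of the conditional KL term to characterise finite-objective minimisers. The only cosmetic difference is how you get $\KL(Q^\tau\Vert\Pi)=\KL(\tau\Vert\pi)$: you read it off directly from the density $\frac{\dd\tau}{\dd\pi}\circ\phi$, whereas the paper identifies $\Pi(\cdot\mid\mbf{u})$ as the conditional law of $Q^\tau$ and reapplies the chain rule.
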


Next, we state a theorem regarding the inducing feature approximation.

\begin{theorem}[Inducing feature restriction]
\label{thm:inducing}
Let $L:\mathcal F\to\mathbb R^m$ be a measurable feature map, write
$\mbf{u}=Lf$, and let
$
\pi_L=L_\#\Pi
$
denote the induced prior law of $\mbf{u}$.
For $\tau\in\mathcal P(\mathbb R^m)$ with
$\KL(\tau\Vert\pi_L)<\infty$, define
\begin{equation*}
Q^\tau(\mathrm df)
\coloneq
\int_{\mathbb R^m}
\Pi(\mathrm df\mid\mbf{u})\,
\tau(\mathrm d\mbf{u}),
\end{equation*}
and for $i=1,\ldots,n$
\begin{equation*}
\widetilde P_{\tau}^{x_i}
\coloneq
\int_{\mathbb R^m}
\left[\int_{\mathcal F}
P_f^{x_i}\,
\Pi(\mathrm df\mid\mbf{u}) \right]
\tau(\mathrm d\mbf{u}).
\label{eq:inducing-predictive}
\end{equation*}
Then minimising the function-space PrO objective over the restricted family
\[
\mathcal Q_L
\coloneq
\left\{
Q^\tau:
\tau\in\mathcal P(\mathbb R^m),
\ \KL(\tau\Vert\pi_L)<\infty
\right\}
\]
is equivalent to
\begin{equation*}
\tau_{L,n}
\in
\argmin_{\tau\in\mathcal P(\mathbb R^m)}
\left\{
\frac{\lambda_n}{n}
\sum_{i=1}^n
S(\widetilde P_\tau^{x_i},y_i)
+
\KL(\tau\Vert\pi_L)
\right\}.
\label{eq:inducing-general-objective}
\end{equation*}
\end{theorem}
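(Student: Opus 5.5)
The plan is to show that the map $\tau\mapsto Q^\tau$ turns the restricted function-space objective into the finite-dimensional one, term by term. Here $\Pi(\dd f\mid\mbf u)$ denotes a regular conditional probability of $\Pi$ given $L$. It exists because $\mathcal F$ is Polish and $L$ is measurable, and it disintegrates $\Pi=\int \Pi(\dd f\mid\mbf u)\,\pi_L(\dd\mbf u)$ with $\Pi(\cdot\mid\mbf u)$ concentrated on $\{f:Lf=\mbf u\}$ for $\pi_L$-a.e.\ $\mbf u$.

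\textbf{Step 1 (predictive term).} We will show that $P_{Q^\tau}^{x_i}=\widetilde P_\tau^{x_i}$ for every $i$ and every admissible $\tau$. By the definition \eqref{eq:function-mixture-predictive}, $P^{x_i}_{Q^\tau}(B)=\int_{\mathcal F}P_f^{x_i}(B)\,Q^\tau(\dd f)$. Since $f\mapsto P_f^{x_i}(B)$ is a bounded measurable function, Fubini for the kernel mixture $Q^\tau=\int\Pi(\cdot\mid\mbf u)\,\tau(\dd\mbf u)$ rewrites this as $\int\!\big[\int P_f^{x_i}(B)\,\Pi(\dd f\mid\mbf u)\big]\tau(\dd\mbf u)=\widetilde P^{x_i}_\tau(B)$. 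Hence $S(P_{Q^\tau}^{x_i},y_i)=S(\widetilde P_\tau^{x_i},y_i)$.

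\textbf{Step 2 (KL term).} The key identity is $\KL(Q^\tau\Vert\Pi)=\KL(\tau\Vert\pi_L)$. Because $\KL(\tau\Vert\pi_L)<\infty$, we have $\tau\ll\pi_L$ with density $r=\dd\tau/\dd\pi_L$. We claim that $\dd Q^\tau/\dd\Pi(f)=r(Lf)$. To check this, take measurable $A$ and compute
\[
\int_A r(Lf)\,\Pi(\dd f)=\int\!\int \mathbbm 1_A(f)\,r(Lf)\,\Pi(\dd f\mid\mbf u)\,\pi_L(\dd\mbf u)=\int\Pi(A\mid\mbf u)\,r(\mbf u)\,\pi_L(\dd\mbf u)=Q^\tau(A).
\]
The middle equality uses that $Lf=\mbf u$ holds $\Pi(\cdot\mid\mbf u)$-a.s. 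It then follows that
\[
\KL(Q^\tau\Vert\Pi)=\int \log r(Lf)\,Q^\tau(\dd f)=\int\log r(\mbf u)\,\tau(\dd\mbf u),
\]
since $L_\#Q^\tau=\tau$, which is again a consequence of the concentration property.

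\textbf{Step 3 (conclusion).} Combining Steps 1 and 2 gives $\mathcal J^\Pi_S[Q^\tau]$ equal to the finite-dimensional objective evaluated at $\tau$, for every admissible $\tau$. The map $\tau\mapsto Q^\tau$ from $\{\tau:\KL(\tau\Vert\pi_L)<\infty\}$ onto $\mathcal Q_L$ is surjective by definition. It is also injective, because $L_\#Q^\tau=\tau$. Minimising over $\mathcal Q_L$ is therefore the same as minimising over such $\tau$. Measures $\tau$ with infinite KL give objective $+\infty$, provided the score terms are bounded below, which holds for the log score with Gaussian densities, or more generally whenever $S$ is not $-\infty$. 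Such $\tau$ can therefore be added to the argmin set without changing it, unless every value is infinite, in which case the equivalence holds trivially. Minimisers thus correspond via $\tau_{L,n}\mapsto Q^{\tau_{L,n}}$.

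The main obstacle is Step 2, specifically rigorously justifying that the conditional law is concentrated on the fibre $\{Lf=\mbf u\}$. This is what yields both $L_\#Q^\tau=\tau$ and the Radon--Nikodym formula. We will invoke the disintegration theorem for Polish spaces, which supplies the fibre-concentration property for $\pi_L$-a.e.\ $\mbf u$. This suffices because $\tau\ll\pi_L$, so $\tau$ also does not charge the exceptional null set.
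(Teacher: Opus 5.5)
Your proof is correct. Its skeleton matches the paper's: the predictive identity $P^{x_i}_{Q^\tau}=\widetilde P^{x_i}_\tau$ by Fubini, then the identity $\KL(Q^\tau\Vert\Pi)=\KL(\tau\Vert\pi_L)$, then transfer of minimisers through $\tau\mapsto Q^\tau$. The difference is that you prove the KL identity by a more elementary argument.

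The paper gets the KL identity from Lemma~\ref{thm:prior-conditional}. It shows $L_\#Q^\tau=\tau$ and that $\Pi(\cdot\mid\mbf u)$ is also a regular conditional law of $f$ given $Lf$ under $Q^\tau$. It then applies the KL chain rule of Lemma~\ref{thm:int-prop}, whose conditional term vanishes. You instead write down $\dd Q^\tau/\dd\Pi=r\circ L$ and integrate. This is exactly the special case $a=r$, $b_{\mbf u}\equiv 1$ of the computation inside Lemma~\ref{thm:int-prop}. It avoids identifying the conditional laws of $Q^\tau$, and it avoids the separate non-absolutely-continuous case.

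For this theorem alone, your route is shorter and self-contained. The paper's route is economical overall because it needs the chain rule anyway in Theorem~\ref{thm:exact-feature}. There, the lower bound $\mathcal J^\Pi_S[Q]\ge\bar{\mathcal J}^\pi_S[\phi_\#Q]$ and the converse (every finite-objective minimiser is a lift) concern arbitrary $Q$, not just lifts.

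You are also more careful than the paper on two points it leaves implicit. First, fibre concentration holds only for $\pi_L$-a.e.\ $\mbf u$, and you correctly neutralise the exceptional set via $\tau\ll\pi_L$. Second, measures with $\KL(\tau\Vert\pi_L)=\infty$ have objective $+\infty$ because $S$ never takes the value $-\infty$. That is what justifies stating the $\argmin$ over all of $\mathcal P(\mathbb R^m)$.
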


\subsection{Proof of Theorem \ref{thm:exact-feature}} \label{ap:proof-theorem-1}

First, we state and prove two intermediate lemmas.

\begin{lemma} \label{thm:int-prop}
Let $T: \mathcal{F} \to \mathbb{R}^d$ be measurable. Define
\begin{align*}
    \tau_Q \coloneq T_\#Q, \qquad \pi_T \coloneq T_\# \Pi
\end{align*}
and let 
\begin{align*}
    Q_{\mbfu} \coloneq Q(\mathrm{d}f \mid \mbfu), \qquad\Pi_\mbfu \coloneq \Pi(\mathrm{d}f \mid \mbfu)
\end{align*}
 denote the regular conditional laws of $f$ given $\mbfu = T(f)$ under $Q$ and $\Pi$ respectively. Then
\begin{equation*}
    \text{KL}(Q \Vert \Pi) =\text{KL}(\tau_Q \Vert \pi_T) + \int_{\mathbb{R}^d} \text{KL}(Q_\mbfu \Vert \Pi_\mbfu) \tau_Q(\mathrm{d}\mbfu)
\label{eq:chain-rule}
\end{equation*}
\end{lemma}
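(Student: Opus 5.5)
This is the chain rule for KL divergence under a measurable map. I will prove it by disintegration, splitting into the case $Q \ll \Pi$ and the case where $Q$ is not absolutely continuous. Since $\mathcal F$ is Polish and $\mathbb R^d$ is standard Borel, regular conditional laws $Q_{\mbfu}$ and $\Pi_{\mbfu}$ exist and are unique $\tau_Q$- and $\pi_T$-a.e. They can be chosen concentrated on the fibre $T^{-1}(\mbfu)$, and they satisfy $Q(\dd f)=\int Q_{\mbfu}(\dd f)\,\tau_Q(\dd\mbfu)$ and $\Pi(\dd f)=\int \Pi_{\mbfu}(\dd f)\,\pi_T(\dd\mbfu)$.

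\textbf{Case $Q\ll\Pi$.} Let $\rho=\dd Q/\dd\Pi$. Pushing forward gives $\tau_Q\ll\pi_T$ with density
\[
r(\mbfu)=\int \rho\,\dd\Pi_{\mbfu}=\mathbb E_\Pi[\rho\mid T=\mbfu].
\]
Next I check that, for $\tau_Q$-a.e.\ $\mbfu$ (where $r(\mbfu)>0$), the conditional law has density
\[
\frac{\dd Q_{\mbfu}}{\dd\Pi_{\mbfu}}(f)=\frac{\rho(f)}{r(T(f))}.
\]
To verify this, integrate a test function $h(f)g(T(f))$ against both sides and use the disintegration of $\Pi$. The density factorises as $\rho(f)=r(T(f))\cdot \rho(f)/r(T(f))$, so taking logs and integrating against $Q$ gives
\[
\int\log\rho\,\dd Q=\int \log r\,\dd\tau_Q+\int\!\!\int \log\frac{\rho}{r\circ T}\,\dd Q_{\mbfu}\,\tau_Q(\dd\mbfu).
\]
This is exactly the claimed identity.

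\textbf{Case $Q\not\ll\Pi$.} The left side is $+\infty$, so I must show the right side is also infinite. If $\tau_Q\not\ll\pi_T$, the first term is already $+\infty$. Otherwise take $A$ with $\Pi(A)=0<Q(A)$. Then $\Pi_{\mbfu}(A)=0$ for $\pi_T$-a.e.\ $\mbfu$, hence for $\tau_Q$-a.e.\ $\mbfu$. On the other hand, $\int Q_{\mbfu}(A)\,\tau_Q(\dd\mbfu)>0$. So $Q_{\mbfu}\not\ll\Pi_{\mbfu}$ on a set of positive $\tau_Q$-measure, and the integral term is $+\infty$.

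\textbf{Main obstacle: integrability when splitting the logarithm.} The log-likelihood ratio need not be integrable, so the split of $\int\log\rho\,\dd Q$ into two integrals needs justification. I will handle this by noting that both pieces are KL divergences, so their negative parts are controlled. Concretely, $x\log x\ge x-1$ gives integrable lower bounds on the negative parts of $\log r$ under $\tau_Q$ and of $\log(\rho/r\circ T)$ under $Q$. The identity then holds in $[0,\infty]$ via Tonelli applied to nonnegative integrands of the form $\rho\log\rho-\rho+1$, split in the same way; the constant terms cancel because $\int\rho\,\dd\Pi=\int r\,\dd\pi_T=1$.

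I also need measurability of $\mbfu\mapsto \mathrm{KL}(Q_{\mbfu}\Vert\Pi_{\mbfu})$ so that the integral term makes sense. This follows from the Donsker--Varadhan variational formula taken over a countable convergence-determining class of test functions, or directly from the density representation above.
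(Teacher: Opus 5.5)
Your proposal is correct and follows the paper's proof essentially step for step: the same case split on whether $Q\ll\Pi$, the same identification of $\mathrm{d}\tau_Q/\mathrm{d}\pi_T$ with $\mathbb{E}_\Pi[\rho\mid T=\mathbf{u}]$ and of $\mathrm{d}Q_{\mathbf u}/\mathrm{d}\Pi_{\mathbf u}$ with $\rho/r$ on the fibre, and the same splitting of the logarithm. Your explicit null set $A$ in the non-absolutely-continuous case and your remarks on integrability and on measurability of $\mathbf u\mapsto\mathrm{KL}(Q_{\mathbf u}\Vert\Pi_{\mathbf u})$ make rigorous points the paper glosses over; note only that the nonnegative decomposition of $\rho\log\rho-\rho+1$ holds after integrating over each fibre, where $r$ is constant, not pointwise.
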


\begin{proof}
Consider the two cases: $Q \not\ll \Pi$ and $Q \ll \Pi$.

Suppose first that $Q \not\ll \Pi$. Then $\text{KL}(Q \Vert \Pi) = + \infty$. If $\tau_Q \not\ll \pi_T$, then $\text{KL}(\tau_Q \Vert \pi_T) = +\infty$ and the RHS is also $+\infty$. Otherwise, $\tau_Q \ll \pi_T$ and $Q \not\ll \Pi$. In this case, there is a set of $\mbf{u}$-values of positive $\tau_Q$-measure on which $Q_\mbfu \not\ll \Pi_\mbfu$; otherwise, the disintegrations of $Q$ and $\Pi$, together with $\tau_Q \ll \pi_T$, would imply $Q \ll \Pi$, a contradiction. Hence, the conditional KL integral is $+\infty$. In either case, the right-hand side is also $+\infty$.

Now suppose $Q \ll \Pi$. Then, by Radon-Nikodym, $h(f) \coloneq \frac{\mathrm{d} Q}{\mathrm{d} \Pi}(f)$ exists. Also, $h \geq 0$ and $\int h \mathrm{d} \Pi = 1$. Define 
\begin{equation*}
    a(\mbfu) \coloneq \mathbb{E}_{\Pi}[h(F) \mid T(F)=\mbfu]
\end{equation*}
We claim that $a(\mbfu) = \frac{\mathrm{d} \tau_Q}{ \mathrm{d}\pi_T}(\mbfu)$ for $\pi_T$-almost every $\mbfu$. To see this, observe that for every bounded measurable test function $\psi : \mathbb{R}^d \to \mathbb{R}$, we have
\begin{equation*}
    \int_{\mathbb{R}^d} \psi(u) \tau_Q(\mathrm{d}\mbfu) = \int_{\mathcal{F}} \psi(T(f)) Q(\mathrm{d}f) = \int_{\mathcal{F}} \psi(T(f)) h(f) \Pi(\mathrm{d}f) = \int_{\mathbb{R}^d} \psi(\mbfu) a(\mbfu) \pi_T(\mathrm{d}\mbfu).
\end{equation*}
The first equality follows from the definition of $\tau_Q$ and pushforward change of variables, and the last from the law of total expectation and the fact that $\pi_T = T_\#\Pi$. Thus the claim holds. Now define $b_\mbfu(f) \coloneq \frac{h(f)}{a(\mbfu)}$ whenever $a(\mbfu) > 0$. We claim that $b_\mbfu$ is the Radon-Nikodym derivative of $Q_\mbfu$ with respect to $\Pi_\mbfu$. To verify this, let $\varphi: \mathcal{F} \to \mathbb{R}$ be bounded and measurable. By disintegration under $\Pi$,
\begin{align*}
    \int_{\mathcal{F}} \varphi(f) Q(\mathrm{d}f) &= \int_{\mathcal{F}} \varphi(f) h(f) \Pi(\mathrm{d}f) \\
    &= \int_{\mathbb{R}^d} \left[ \int_{\mathcal{F}} \varphi(f) h(f) \Pi_\mbfu(\mathrm{d}f) \right] \pi_T( \mathrm{d}\mbfu) \\
    &= \int_{\mathbb{R}^d} \left[ \int_{\mathcal{F}} \varphi(f) a(\mbfu) b_\mbfu(f)\Pi_\mbfu(\mathrm{d}f) \right] \pi_T( \mathrm{d}\mbfu) \\
&= \int_{\mathbb{R}^d} \left[ \int_{\mathcal{F}} \varphi(f) b_\mbfu(f)\Pi_\mbfu(\mathrm{d}f) \right] \tau_Q( \mathrm{d}\mbfu) \\
\end{align*}
 Similarly, disintegration of $Q$ gives $\int_{\mathcal{F}} \varphi(f) Q(\mathrm{d}f) = \int_{\mathbb{R}^d} \left[\int_{\mathcal{F}} \varphi(f) Q_\mbfu(\dd f)  \right] \tau_Q(\dd \mbfu)$. Hence, $Q_\mbfu(\dd f) = b_\mbfu(f)\Pi_\mbfu(\dd f)$ for $\tau_Q$-almost every $\mbfu$. It remains to compute
\begin{align*}
    \text{KL}(Q\Vert\Pi)
    &= \int_{\mathcal F} h(f)\log h(f)\,\Pi(\dd f) \\
    &= \int_{\mathbb R^d}\int_{\mathcal F} a(\mbfu)b_\mbfu(f)\log\!\left(a(\mbfu)b_\mbfu(f)\right)\Pi_\mbfu(\dd f)\pi_T(\dd\mbfu) \\
    &= \int_{\mathbb R^d} a(\mbfu)\log a(\mbfu)\,\pi_T(\dd\mbfu)
    + \int_{\mathbb R^d}\left[\int_{\mathcal F} b_\mbfu(f)\log b_\mbfu(f)\,\Pi_\mbfu(\dd f)\right]\tau_Q(\dd\mbfu) \\
    &= \text{KL}(\tau_Q\Vert\pi_T)
    + \int_{\mathbb R^d}\text{KL}(Q_\mbfu\Vert\Pi_\mbfu)\,\tau_Q(\dd\mbfu).
\end{align*}
\end{proof}

\begin{lemma}
\label{thm:prior-conditional}
    Let $T:\mathcal F\to\mathbb R^d$ be measurable, define $\pi_T\coloneq T_\#\Pi$, and let $\tau\in\mathcal P(\mathbb R^d)$ satisfy $\KL(\tau\Vert\pi_T)<\infty$.
    Define the prior-conditional lift 
\begin{align*}
    Q^{\tau} \coloneq \int_{\mathbb{R}^d} \Pi(\dd f \mid \mbf{u}) \tau(\dd \mbf{u}).
\end{align*} 
Then $T_\# Q^{\tau} = \tau$ and $Q^\tau(\dd f \mid \mbf{u}) = \Pi(\dd f \mid \mbf{u})$, $\tau$-a.e. Consequently, $\text{KL}(Q^\tau \Vert \Pi) = \text{KL}(\tau\Vert\pi_T)$.
\end{lemma}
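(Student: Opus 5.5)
The plan is to verify the two structural claims about $Q^\tau$ directly from the definition of the prior-conditional lift, using the defining properties of the regular conditional distribution $\Pi(\dd f\mid\mbf u)$. Then I will read off the KL identity from the chain rule in Lemma~\ref{thm:int-prop}. Throughout I use that $\Pi_{\mbf u}\coloneq\Pi(\dd f\mid\mbf u)$ is a regular conditional law of $f$ given $T(f)=\mbf u$. Such a law exists because $\mathcal F$ is Polish. It is concentrated on the fibre $\{f: T(f)=\mbf u\}$ for $\pi_T$-a.e.\ $\mbf u$, and it is unique up to $\pi_T$-null sets. Since $\KL(\tau\Vert\pi_T)<\infty$ forces $\tau\ll\pi_T$, these $\pi_T$-null ambiguities are also $\tau$-null. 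Hence $Q^\tau$ is well defined, independent of the version chosen.

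\textbf{Step 1: pushforward.} For a bounded measurable $\psi:\mathbb R^d\to\mathbb R$, I will compute
\[
\int \psi(T(f))\,Q^\tau(\dd f)=\int_{\mathbb R^d}\Big[\int_{\mathcal F}\psi(T(f))\,\Pi_{\mbf u}(\dd f)\Big]\tau(\dd\mbf u)=\int_{\mathbb R^d}\psi(\mbf u)\,\tau(\dd\mbf u).
\]
The inner integral equals $\psi(\mbf u)$ for $\pi_T$-a.e.\ $\mbf u$, hence for $\tau$-a.e.\ $\mbf u$, because $\Pi_{\mbf u}$ lives on the fibre over $\mbf u$. This gives $T_\#Q^\tau=\tau$.

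\textbf{Step 2: conditional law.} Once the marginal is known, the mixture representation $Q^\tau(\dd f)=\int\Pi_{\mbf u}(\dd f)\,\tau(\dd\mbf u)$ is itself a disintegration of $Q^\tau$ along $T$ with respect to $T_\#Q^\tau=\tau$. To see this, I will check that for bounded measurable $\varphi$ on $\mathcal F$ and $\psi$ on $\mathbb R^d$,
\[
\int\varphi(f)\psi(T(f))\,Q^\tau(\dd f)=\int\psi(\mbf u)\Big[\int\varphi\,\dd\Pi_{\mbf u}\Big]\tau(\dd\mbf u),
\]
again using concentration on fibres. Uniqueness of regular conditional distributions, which holds on Polish spaces, then gives $Q^\tau(\dd f\mid\mbf u)=\Pi_{\mbf u}$ for $\tau$-a.e.\ $\mbf u$.

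\textbf{Step 3: KL identity.} I apply Lemma~\ref{thm:int-prop} with $Q=Q^\tau$. By Step 1, $\tau_{Q^\tau}=\tau$. By Step 2, $\KL(Q_{\mbf u}\Vert\Pi_{\mbf u})=0$ for $\tau$-a.e.\ $\mbf u$, so the conditional term vanishes and $\KL(Q^\tau\Vert\Pi)=\KL(\tau\Vert\pi_T)$. One could also argue directly via $\dd Q^\tau/\dd\Pi=\tfrac{\dd\tau}{\dd\pi_T}\circ T$, but invoking the lemma is cleaner.

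\textbf{Main obstacle.} The delicate point is measure-theoretic: the fibre-concentration property of $\Pi_{\mbf u}$ and the a.e.\ qualifiers. These must be transferred from $\pi_T$-a.e.\ to $\tau$-a.e., which is exactly where the hypothesis $\KL(\tau\Vert\pi_T)<\infty$, through $\tau\ll\pi_T$, is used. I will state explicitly that fibre concentration follows from taking $\varphi=\mathbbm 1\{T(f)\in B\}$ in the disintegration of $\Pi$. This gives $\Pi_{\mbf u}(T^{-1}B)=\mathbbm 1_B(\mbf u)$ for $\pi_T$-a.e.\ $\mbf u$, simultaneously over a countable generating $\pi$-system of Borel sets $B$.
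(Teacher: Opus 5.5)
Your proposal is correct and follows the paper's proof step for step. You get the pushforward identity from fibre concentration of $\Pi(\cdot\mid\mbf u)$. You identify the conditional law by checking the joint identity on sets $A\cap T^{-1}(B)$, which your test functions $\varphi\cdot(\psi\circ T)$ encode. You then apply Lemma~\ref{thm:int-prop}, where the conditional KL term vanishes. Your explicit transfer of the $\pi_T$-a.e.\ fibre concentration to $\tau$-a.e.\ via $\tau\ll\pi_T$ is a detail the paper leaves implicit, and it is exactly where the finite-KL hypothesis is used.
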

\begin{proof}
For any Borel $B \subseteq \mathbb{R}^d$,
\begin{align*}
    (T_\#Q^\tau)(B) &= Q^\tau(T^{-1}(B)) = \int \Pi(T^{-1}(B) \mid \mbf{u}) \tau(\dd \mbf{u}) = \int \mathbf{1}_B(\mbf{u}) \tau(\dd \mbf{u}) = \tau(B),
\end{align*}
since $\Pi(\cdot \mid \mbf{u})$ is a regular conditional law given $T(f) = \mbf{u}$, and hence is concentrated on $T^{-1}(\{\mbf{u} \})$. Thus $T_\#Q^\tau=\tau$.

Moreover, for measurable $A \subseteq \mathcal{F}$ and $B \subseteq \mathbb{R}^d$,
\begin{align*}
    Q^{\tau}(A \cap T^{-1}(B)) &= \int_{\mathbb{R}^d} \Pi(A  \cap T^{-1}(B) \mid \mbf{u}) \tau(\dd \mbf{u}) = \int_B \Pi(A \mid \mbf{u}) \tau( \dd \mbf{u}) = \int_{B} \Pi(A \mid \mbf{u}) (T_\# Q^{\tau})(\dd \mbf{u}).
\end{align*}
Therefore $\Pi(\cdot \mid \mbf{u})$ is a regular conditional probability under $Q^\tau$, and $Q^{\tau}(\dd f \mid \mbfu) = \Pi(\dd f \mid \mbfu)$, $\tau$-a.e. Applying Lemma \ref{thm:int-prop} with $Q=Q^{\tau}$,
\begin{align*}
    \text{KL}(Q^{\tau}\Vert\Pi)=\text{KL}(\tau\Vert\pi_T),
\end{align*}
since the conditional KL term vanishes.
\end{proof}

Now, let $\tau_Q \coloneq \phi_{\#}Q$ and $\pi \coloneq \phi_\# \Pi$. By Assumption \ref{as:feature-factorisation}, for $i = 1, \ldots,n$ 
\begin{equation*}
    P_Q^{x_i} = \int_{\mathcal{F}} P_f^{x_i} Q(\dd f) = \int_{\mathcal{F}} \bar{P}_{\phi(f)}^{x_i} Q(\dd f) = \int \bar{P}_u^{x_i} \tau_Q(\dd \mbf{u}) = \bar{P}_{\tau_Q}^{x_i}
\end{equation*}
Recall the full PrO functional $\mathcal{J}_{S}^{\Pi}[Q]$ from \eqref{eq:full-function-pro}, and define, for any $\tau \in \mathcal{P}(\mathbb{R}^d)$, the reduced functional $\bar{\mathcal{J}}_{S}^{\pi}[\tau]$
\begin{align*}
    \bar{\mathcal{J}}_{S}^{\pi}[\tau] &\coloneq \frac{\lambda_n}{n} \sum_{i=1}^n  S(\bar{P}_\tau^{x_i}, y_i) + \text{KL}(\tau \Vert\pi).
\end{align*}
Applying Lemma \ref{thm:int-prop} with $T = \phi$,
\begin{align*}
    \mathcal{J}_{S}^{\Pi}[Q] &=  \bar{\mathcal{J}}_{S}^{\pi}[\tau_Q] + \int_{\mathbb{R}^d}\text{KL}(Q_\mbfu\Vert\Pi_\mbfu)\tau_Q(\dd\mbfu) \geq  \bar{\mathcal{J}}_{S}^{\pi}[\tau_Q] \\
    \implies \inf_{Q\in\mathcal{P}(\mathcal{F})}\mathcal{J}_{S}^{\Pi}[Q] &\geq \inf_{\tau\in\mathcal{P}(\mathbb{R}^d)} \bar{\mathcal{J}}_{S}^{\pi}[\tau].
\end{align*}
For the reverse inequality, fix $\tau \in \mathcal{P}(\mathbb{R}^d)$ and assume $\text{KL}(\tau \Vert \pi) < \infty$. Let $Q^\tau$ denote the prior conditional lift from Lemma \ref{thm:prior-conditional}. Then $\text{KL}(Q^\tau \Vert \Pi) = \text{KL}(\tau \Vert \pi)$. Furthermore, Assumption \ref{as:feature-factorisation} gives $P_{Q^\tau}^{x_i} = \bar{P}_{\tau}^{x_i}$ for $i = 1, \ldots,n$. Hence $\mathcal{J}_{S}^{\Pi}[Q^\tau] =  \bar{\mathcal{J}}_{S}^{\pi}[\tau]$ and it follows that
\begin{equation*}
    \inf_{Q\in \mathcal{P}(\mathcal{F})} \mathcal{J}_{S}^{\Pi}[Q] \leq \inf_{\tau \in \mathcal{P}(\mathbb{R}^d)} \bar{\mathcal{J}}_{S}^{\pi}[\tau]. 
\end{equation*}
Combining the two inequalities proves
\eqref{eq:fs-opt}.

Now, let $\tau_n$ be a minimiser of \eqref{eq:fs-opt} with $\text{KL}(\tau_n \Vert \pi) < \infty$. Its prior-conditional lift $Q_n^{\text{PrO}} = Q^{\tau_n}$ satisfies $ \mathcal{J}_{S}^{\Pi}[Q_n^{\text{PrO}}] = \bar{\mathcal{J}}_{S}^{\pi}[\tau_n]$ and hence minimises the original function-space objective. Conversely, let $Q_n$ be a function-space minimiser and set $\tau_n = \phi_\#Q_n$. Lemma \ref{thm:int-prop} gives $ \mathcal{J}_{S}^{\Pi}[Q_n] = \bar{\mathcal{J}}_{S}^{\pi}[\tau_n] + \int_{\mathbb{R}^d}\text{KL}((Q_n)_\mathbf{u} \Vert \Pi_{\mbfu}) \tau_n(\dd \mbfu)$. However, minimality of $Q_n$ forces $\int_{\mathbb{R}^d}\text{KL}((Q_n)_\mathbf{u} \Vert \Pi_{\mbfu}) \tau_n(\dd \mbfu) = 0$, in which case $(Q_n)_{\mbfu} = \Pi_\mbfu$ for $\tau_n$-a.e. $\mbfu$. Consequently, disintegration gives $Q_n(\dd f) = \int_{\mathbb{R}^d} \Pi(\dd f \mid \mbfu) \tau_n(\dd \mbfu)$.

\subsection{Proof of Theorem 4}
\label{ap:proof-theorem-2}
Fix $\tau\in\mathcal P(\mathbb R^m)$ satisfying $\KL(\tau\Vert\pi_L)<\infty$. Applying Lemma \ref{thm:prior-conditional} with $T = L$ gives $\text{KL}(Q^\tau \Vert \Pi) = \text{KL}(\tau \Vert \pi_L)$.
Moreover, for each $i=1,\ldots,n$, 
\begin{align*}
    P_{Q^{\tau}}^{x_i} = \int_{\mathcal{F}} P_f^{x_i} Q^{\tau}(\dd f) = \int_{\mathbb{R}^m} \left[ \int_\mathcal{F} P_f^{x_i} \Pi(\dd f \mid \mbf{u})\right] \tau(\dd \mbf{u}) = \widetilde{P}_{\tau}^{x_i}
\end{align*}
Combining these identities yields
\begin{align*}
     \mathcal{J}_{S}^{\Pi}[Q^\tau] = \frac{\lambda_n}{n} \sum_{i=1}^n S(\widetilde{P}_\tau^{x_i}, y_i) + \text{KL}(\tau \Vert \pi_L)
\end{align*}
Since every element of the restricted family $\mathcal{Q}_L$ is of the form $Q^\tau$ for some $\tau \in \mathcal{P}(\mathbb{R}^m)$, minimising the function-space PrO objective over $\mathcal{Q}_L$ is therefore equivalent to
\begin{align*}
\tau_{L,n} \in \argmin_{\tau\in\mathcal P(\mathbb R^m)}\left\{\frac{\lambda_n}{n}\sum_{i=1}^n S(\widetilde P_\tau^{x_i},y_i)
+
\KL(\tau\Vert\pi_L)
\right\}.
\end{align*}

\subsection{Proofs of Theorems 1 \& 2}
\label{ap:proof-gps}
The proofs of Theorems \ref{thm:exact-gp-reduction} and \ref{thm:inducing-point-gp-restriction} now follow as simple corollaries of Theorems \ref{thm:exact-feature} and \ref{thm:inducing}. 

Recall the GP regression setting: $Y_i \mid f, x_i \sim P_f(y_i \mid x_i) = \mathcal{N}\left(f(x_i), \sigma^2\right)$ where $f\sim \mathcal{GP}(m,k)$. Assume that the GP prior admits a version a version with continuous sample paths, therefore taking $\mathcal{F} = C(\mathcal{X})$. Now, choose the feature map to be pointwise evaluation at the training inputs:
$
    \phi_X(f)
    \coloneqq
    \mathbf f_X
    =
    \bigl(f(x_1),\ldots,f(x_n)\bigr)^\top.
$
For each $\mbf{u}\in\mathbb R^n$ and $i=1,\ldots,n$, define the probability
kernel $
    \bar P_{\mbf{u}}^{x_i}(\mathrm dy)
    \coloneqq
    \mathcal N(\mathrm dy;\mbf{u}_i,\sigma^2)
$.
Then, for every $f\in\mathcal F$ and $i=1,\ldots,n$,
\[
\begin{aligned}
    \bar P_{\phi_X(f)}^{x_i}(\mathrm dy)
    &=
    \mathcal N\!\left(
        \mathrm dy;
        [\phi_X(f)]_i,
        \sigma^2
    \right) \\
    &=
    \mathcal N\!\left(
        \mathrm dy;
        f(x_i),
        \sigma^2
    \right)
    =
    P_f^{x_i}(\mathrm dy).
\end{aligned}
\]
Hence Assumption~\ref{as:feature-factorisation} holds with $d=n$ and $\phi=\phi_X$. Therefore, Theorem \ref{thm:exact-gp-reduction} is just a restatement of Theorem \ref{thm:exact-feature} in the GP regression setting with the log-score $S(P_Q^{x_i}, y_i) = - \log p_Q(y_i \mid x_i)$ and pushforward prior $\pi = \pi_X \coloneq (\phi_X)_\#\Pi$.

Finally, Theorem \ref{thm:inducing-point-gp-restriction} follows as a simple consequence of taking the inducing features to be evaluations of the latent function at a collection $\Lambda = (\lambda_1, \ldots, \lambda_m)$ of inducing points. These locations define the evaluation map $L_{\Lambda}:\mathcal F\to\mathbb R^m$ where
$L_{\Lambda}f
    \coloneqq
    \mathbf{f}_{\Lambda}$, with induced prior $\pi_{L_{\Lambda}} \coloneq \pi_{\Lambda} = \mathcal{N}\left(\mathbf{m}_{\Lambda}, K_{\Lambda \Lambda}\right)$. 

\section{ADDITIONAL DETAILS ON PRO-GPS}

\subsection{Variational characterisation of the Bayesian posterior} \label{ap:bayes-as-gibbs}

For $\lambda_n=n$, the variational objective in \eqref{eq:variational-bayes-posterior} is therefore
\begin{equation*}
    \mathcal L_n(Q)
    \coloneq
    -\sum_{i=1}^n
    \int_{\mathcal F}
    \log p_f(y_i\mid x_i)\,
    Q(\dd f)
    +
    \KL(Q\Vert\Pi).
\end{equation*}
Define the marginal likelihood
\[
    Z_n
    \coloneq
    \int_{\mathcal F}
    \prod_{i=1}^n
    p_f(y_i\mid x_i)\,
    \Pi(\dd f),
\]
and assume $0<Z_n<\infty$. The Bayesian posterior satisfies
\[
    Q_n^{\mathrm B}(\dd f)
    =
    \frac{1}{Z_n}
    \prod_{i=1}^n
    p_f(y_i\mid x_i)\,
    \Pi(\dd f),
\]
and hence, by the Radon–Nikodym theorem,
\[
    \frac{\dd Q_n^{\mathrm B}}{\dd\Pi}(f)
    =
    \frac{1}{Z_n}
    \prod_{i=1}^n
    p_f(y_i\mid x_i).
\]
For any $Q\ll\Pi$ for which the objective is finite,
\begin{align*}
    \KL(Q\Vert Q_n^{\mathrm B})
    &=
    \int_{\mathcal F}
    \log
    \left(
        \frac{\dd Q/\dd\Pi}
        {\dd Q_n^{\mathrm B}/\dd\Pi}
    \right)
    Q(\dd f) \\
    &=
    \KL(Q\Vert\Pi)
    -
    \sum_{i=1}^n
    \int_{\mathcal F}
    \log p_f(y_i\mid x_i)\,
    Q(\dd f)
    +
    \log Z_n \\
    &=
    \mathcal L_n(Q)+\log Z_n.
\end{align*}
Therefore,
\[
    \mathcal L_n(Q)
    =
    \KL(Q\Vert Q_n^{\mathrm B})
    -
    \log Z_n.
\]
Since $\log Z_n$ is independent of $Q$ and the KL divergence is uniquely minimised at $Q=Q_n^{\mathrm B}$,
\begin{equation*}
    Q_n^{\mathrm B}
    =
    \argmin_{Q\in\mathcal P(\mathcal F)}
    \mathcal L_n(Q).
\end{equation*}
Thus, when $\lambda_n=n$ the variational formulation recovers the Bayesian posterior exactly.

\subsection{Posterior Prediction for PrO-GPs}
\label{ap:gp-posterior-prediction}
Recall the form of the PrO posterior, given the optimised law $\tau_n$ in \eqref{eq:fs-pro-posterior}
\begin{equation*}
    Q_n^{\text{PrO}}(\mathrm df)
    =
    \int_{\mathbb{R}^n}
    \Pi(\mathrm df\mid \mbf{u})\,
    \tau_n(\mathrm d \mbf{u}),
\end{equation*}
By the conditional GP integration formulae in \eqref{eq:sn_squared}, we have $\Pi(\dd f \mid \mbf{u}) = \mathcal{GP}(\mu_{\mbf{u}}, s_X)$ as in Section \ref{sec:gps}. Then, by \eqref{eq:function-mixture-predictive}, the posterior predictive is
\begin{align*}
    P_{Q_n^\text{PrO}}^{x_\star}(\mathrm{d}y) &= \int_{\mathcal{F}} P_f^{x_\star}(\mathrm{d}y) Q_n^\text{PrO}(\dd f)  \\
     &= \int_{\mathcal{F}} P_f^{x_\star}(\mathrm{d}y ) \left[ \int_{\mathbb{R}^n}\Pi(\mathrm{df}\mid \mbf{u}) \tau_n(\mathrm{d} \mbf{u})  \right]\\ 
     &= \int_{\mathbb{R}^n} \left[\int_{\mathcal{F}} \mathcal{N}(\mathrm{d}y; f(x_\star), \sigma^2) \Pi(\mathrm{df}\mid \mbf{u}) \right]\tau_n(\mathrm{d} \mbf{u}) \\
     &=  \int_{\mathbb{R}^n} \left[\int_{\mathbb{R}} \mathcal{N}(\mathrm{d}y; a, \sigma^2) \mathcal{N}(\dd a; \mu_{\mbf{u}}(x_\star), s_X(x_\star,x_\star)) \right]\tau_n(\dd \mbf{u}) \\
     &= \int_{\mathbb{R}^n} \mathcal{N}(\mathrm{d}y; \mu_\mbf{u}(x_\star), s_X(x_\star,x_\star) + \sigma^2) \tau_n(\dd \mbf{u})
\end{align*}
In practice, $\tau_n$ is replaced by its particle approximation $\hat\tau_n^{1:K} = \frac{1}{K} \sum_{k=1}^K{\delta_{\mathbf{U}^{k}}}$ which yields the finite-particle posterior predictive used for evaluating NLPD in the experiments
\begin{equation}
    P_{Q_n^{\text{PrO}}}^{x_\star}(\dd y) \approx \frac{1}{K} \sum_{k=1}^K \mathcal{N}(\dd y ; \mu_{\mathbf{U}^{k}}(x_\star), s_X(x_\star, x_\star) + \sigma^2)
    \label{eq:pro-particle-predictive}
\end{equation}

\subsection{Connection to Parametric Gaussian Process Regressors} \label{ap:ppgr}

Parametric Gaussian Process Regressors introduce a ``predictively oriented'' objective $\mathcal{L}_{\text{ppgpr}}$ which bears close similarity to the objective in \eqref{eq:inducing-pro-gp-objective}. It is defined as \citep[Equation 18]{jankowiak2020parametric}
\begin{align}
    \mathcal{L}_{\text{ppgpr}} \coloneq\sum_{i=1}^n \log \varphi_{\sigma^2 + \sigma_q^2(x_i)}(y_i - \mu_q(x_i)) -\beta_{\text{reg} } \text{KL}(q \Vert \pi_\Lambda)
    \label{eq:ppgpr}
\end{align}
where $\varphi_{v}$ is a centred Gaussian with variance $v$ and
\begin{align*}
    q(\dd \mbf{u}) \coloneq \mathcal{N}(\dd\mbfu; \mbf{m}_q, \mbf{S}), \qquad
    \mu_q(x_i) \coloneq k_{x_i \Lambda}^\top K_{\Lambda \Lambda}^{-1} \mbf{m}_q, \qquad
    \sigma_q^2(x_i) \coloneq  r_{\Lambda}(x_i, x_i) +k_{x_i \Lambda}^\top K_{\Lambda \Lambda}^{-1}\mbf{S} K_{\Lambda \Lambda}^{-1} k_{\Lambda x_i},
\end{align*}
where we recall the definition of $r_{\Lambda}$ in \eqref{eq:r_lambda}. To keep our notation consistent throughout this work we have adapted the notation appearing in the original paper to our setting.

In fact, $\mathcal{L}_{\text{ppgpr}}$ is \textit{exactly} our objective \eqref{eq:inducing-pro-gp-objective}, with the restriction of the mixing measure $\tau$ to a Gaussian $q$. To show this, fix the inducing points $\Lambda$, kernel hyperparameters and $\sigma^2$. Recall the inducing-point PrO-GP objective:
\begin{align*}
    \mathcal{J}[\tau]
    &\coloneqq
    -\frac{\lambda_n}{n}
    \sum_{i=1}^n \log p_\tau(y_i \mid x_i)
    +
    \KL(\tau \| \pi_{\Lambda}), \\
    p_{\tau}({y_i \mid x_i})
&=
    \int_{\mathbb{R}^m}
    \varphi_{\sigma^2 + r_\Lambda(x_i,x_i)}
    \bigl(y_i-a_{\mbfu}(x_i)\bigr)
    \tau(\dd \mbfu).
\end{align*}
Now, restrict the inducing-feature law to the Gaussian family $
\tau(\dd\mbfu) = q(\dd\mbfu)$. In addition, assume a zero-mean GP so that $m(x_i) = 0$ and $\mbf{m}_{\Lambda} = \mbf{0}$. Then, $a_{\mbfu}(x_i)=k_{x_i \Lambda}^\top K_{\Lambda \Lambda}^{-1}\mbfu.$
Since the pushforward of $q$ through this linear map is Gaussian,
\begin{align*}
(a_{\cdot}(x_i))_\# q &= \mathcal{N} \left( k_{x_i \Lambda}^\top K_{\Lambda \Lambda}^{-1}\mbf{m}_q,\, k_{x_i \Lambda}^\top K_{\Lambda \Lambda}^{-1}\mbf{S} K_{\Lambda \Lambda}^{-1} k_{\Lambda x_i}\right). \\
&= \mathcal{N}(\mu_q(x_i), \sigma_q^2(x_i) - r_{\Lambda}(x_i, x_i))
\end{align*}
By pushforward change of variables,
\begin{align*}
&\int_{\mathbb{R}^m} \varphi_{\sigma^2 + r_{\Lambda}(x_i, x_i)} \bigl(y_i-a_{\mbfu}(x_i)\bigr)q(\dd\mbfu) \\
&\qquad=\int_{\mathbb{R}}\varphi_{\sigma^2 + r_{\Lambda}(x_i, x_i)}(y_i-a)\,\mathcal{N}
\left(\dd a; \mu_q(x_i),\, \sigma_q^2(x_i) - r_{\Lambda}(x_i, x_i)
\right)
\\
&\qquad=\varphi_{\sigma^2 +\sigma_q^2(x_i)}\left(y_i-\mu_q(x_i)\right),
\end{align*}
Hence, \eqref{eq:inducing-pro-gp-objective} becomes
\begin{align*}
    \mathcal{J}[q]
    &=
    -\frac{\lambda_n}{n}
     \sum_{i=1}^n \log \varphi_{\sigma^2 +\sigma_q^2(x_i)}\left(y_i-\mu_q(x_i)\right)
    +
    \KL(q \| \pi_{\Lambda}).
\end{align*}

Lastly, setting $
    \beta_{\text{reg}} = n /\lambda_n
$
and multiplying by this term gives the negative of $\mathcal{L}_{\text{ppgpr}}$. Hence, minimising the inducing-point PrO-GP objective with a Gaussian restriction to $\tau$ is equivalent to maximising the PPGPR objective.

\subsection{Replica--Gibbs algorithm and computational complexity}
\label{ap:computational-complexity}

\paragraph{Replica--Gibbs in whitened parameterisation} Assume we are given training inputs $X=(x_1,\ldots,x_n)$, inducing locations $\Lambda=(\lambda_1,\ldots,\lambda_m)$, and $K$ particles $\mathbf{U}^{1:K}=(\mbf{U}_1,\ldots,\mbf{U}_K)$. We treat the observation noise $\sigma$ and kernel hyperparameters $\gamma$ as fixed. Additionally, we use a zero-mean GP prior so that $\mbf{U}^k\sim\pi=\mathcal{N}(0,K_\pi).$

For numerical stability, we work in whitened coordinates. Let $K_\pi=L_\pi L_\pi^\top$ be the Cholesky factorisation of the prior covariance and write $\mbf{U}^k=L_\pi\mbf{Z}^k$ where $\mbf{Z}^k\sim\mathcal{N}(0,I_m).$
Thus, the algorithm operates directly on the whitened particles $\mbf{Z}^k$, with the corresponding inducing values recovered as $\mbf{U}^k=L_\pi\mbf{Z}^k$.
Let $\mbf{h}_i\in\mathbb{R}^m$ denote the linear coefficient appearing in the conditional predictive mean, so that
\[
    \mbf{h}_i^\top\mbf{U}^k = \mbf{h}_i^\top L_\pi\mbf{Z}^k = B_i^\top\mbf{Z}^k,
    \qquad
    B_i\coloneqq L_\pi^\top\mbf{h}_i.
\]
Writing $v_i$ for the conditional predictive variance and $C_{ik}$ for the allocation count associated with observation $i$ and particle $k$, the Gaussian update \eqref{eq:replica-gaussian-update} for whitened variables $\mathbf{z}$ becomes
\begin{align}
    \Sigma_k^{-1} &= I_m + \sum_{i=1}^{n}\frac{C_{ik}}{v_i}B_iB_i^\top, \notag\\
    \mbf{m}_k^\star &= \Sigma_k\sum_{i=1}^{n}\frac{C_{ik}y_i}{v_i}B_i.
    \label{eq:replica-gibbs-whitened}
\end{align}
The quantities $B_i$ can be precomputed. Let $H=(\mbf{h}_1,\ldots,\mbf{h}_n)\in\mathbb{R}^{m\times n}$ and define $B\coloneqq L_\pi^\top H=(B_1,\ldots,B_n).$
In the exact case, $m=n$, $H=I_n$, and therefore $B=L_X^\top,$ where $K_{XX}=L_XL_X^\top$. In the inducing case, $
    L_\pi=L_\Lambda,
    H=K_{\Lambda\Lambda}^{-1}K_{\Lambda X},
$
and hence
$
    B=L_\Lambda^\top K_{\Lambda\Lambda}^{-1}K_{\Lambda X}
    =L_\Lambda^{-1}K_{\Lambda X}.
$ See Algorithm \ref{alg:whitened-replica-gibbs} for the full whitened Replica--Gibbs sampling procedure.

\begin{algorithm}[t]
\caption{Whitened Replica--Gibbs sampler}
\label{alg:whitened-replica-gibbs}
\begin{algorithmic}
\State \textbf{Input:} training data $\{(x_i,y_i)\}_{i=1}^n$, inducing locations $\Lambda$, number of particles $K$, fixed kernel hyperparameters $\gamma$, fixed observation noise $\sigma$, score-weighting $\lambda_n$ and number of Gibbs iterations $T$
\State \textbf{Initialise:}
\State Compute the prior covariance $K_\pi$ and its Cholesky factorisation $K_\pi=L_\pi L_\pi^\top$.
\State Precompute $B=L_\pi^\top H$.
\State Sample the initial whitened particles $\mbf{Z}^k_{(0)}\sim\mathcal{N}(0,I_{m})$ independently for $k=1,\ldots, K$.
\State \textbf{For} $t=1,\ldots,T$
\State \hspace{1em}$\bullet$ Compute the particle predictions $B_i^\top \mbf{Z}^k_{(t-1)}$ for all $i=1,\ldots,n$ and $k=1,\ldots,K$.
\State \hspace{1em}$\bullet$ Sample the allocation counts $C_{ik}$ according to \eqref{eq:replica-allocations}.
\State \hspace{1em}$\bullet$ For each particle $k=1,\ldots,K$, compute the $\mbf{m}_k^\star$ and $\Sigma_k$ according to \eqref{eq:replica-gibbs-whitened}.
\State \hspace{1em}$\bullet$ Sample $\mbf{Z}^k_{(t)}\sim\mathcal{N}(\mbf{m}_k^\star,\Sigma_k)$ for $k=1,\ldots,K$.
\State \textbf{Return:} After $\widetilde{T}$ burn-in iterations, return unwhitened particles $\mbf{U}_{(t)}^k=L_\pi\mbf{Z}^k_{(t)}$ for $k=1,\ldots,K$ and $t > \widetilde{T}$.
\end{algorithmic}
\end{algorithm}

\paragraph{Computational complexity}

Assume we operate in the sparse regime $m \ll n$. The exact case can be recovered setting by $m = n$. The time complexity per sweep is dominated by forming the $K$, $m\times m$ precision matrices $\Sigma_k^{-1}$ (which involves $n$ summations) and subsequently factoring each of these matrices. Hence the dominant term is $\mathcal{O}(Knm^2 + Km^3)$. This compares favourably to SVGP, where the dominant cost for optimising hyperparameters is $\mathcal{O}(n_bm^2 + m^3)$ per iteration, where $n_b$ is the batch size. Although the additional $K$ term may appear prohibitive, the Replica--Gibbs updates are \textit{exact}, meaning we typically require as little as 50 iterations for convergence, compared to SVGP which may require many thousands. In addition, setting the batch size $n_b < n$ means that more iterations are required for a full pass over the data, which may therefore require more iterations for convergence. See Appendix \ref{ap:numerical-runtime-analysis} for numerical runtime analysis. Finally, the space complexity is dominated by the matrix $B \in \mathbb{R}^{m \times n}$, and hence is $\mathcal{O}(nm)$.

\paragraph{Choosing the number of particles}

Choosing the number of particles is a trade-off between flexibly representing the law $\tau_n$ and computational complexity.
Larger $K$ permits richer empirical approximations to the mixing law, but the computational cost enters multiplicatively into the dominant $\mathcal{O}(Knm^2)$ term and can therefore be prohibitive for large $n$ and $m$.
In practice, we find that $K=50$ provides a suitable tradeoff between robust predictive performance and time complexity throughout our experiments.

\subsection{Degenerate \texorpdfstring{$\sigma \to 0$}{sigma to 0} solutions during hyperparameter optimisation}
\label{ap:degenerate-sigma}
Assume we use the same dataset $\mathcal{D}_n$ both to fit the empirical particle law $\widehat{\tau}_{\mathbf{U}^{1:K}} \coloneq \sum_{i=1}^K \delta_{\mathbf{U}^k}$ and to optimise $\sigma$.
For fixed particles, define the empirical predictive score
\begin{equation*}
    \mathcal{R}_n(\widehat{\tau}_{\mathbf{U}^{1:K}}; \sigma)
    \coloneq
    -\frac{\lambda_n}{n}
    \sum_{i=1}^n
    \log\left\{
        \frac{1}{K}
        \sum_{k=1}^K
        \varphi_{v_i}(\epsilon_{ik})
    \right\},
\end{equation*}
where $\epsilon_{ik}\coloneq y_i-b_i-\mathbf h_i^\top\mathbf U^k$ and $v_i=r_i+\sigma^2$, with $r_i=0$ in the exact construction and $r_i=r_\Lambda(x_i,x_i)$ in the inducing construction.
Differentiating with respect to $\sigma$ gives
\begin{align*}
    \nabla_\sigma \mathcal{R}_n(\widehat{\tau}_{\mathbf{U}^{1:K}}; \sigma)
    =
    \frac{\lambda_n}{n}
    \sum_{i=1}^n
    \sigma
    \left[
        \frac{1}{v_i}
        -
        \sum_{k=1}^K
        \frac{\rho_{ik}\epsilon_{ik}^2}{v_i^2}
    \right],
\end{align*}
where $\rho_{ik}$ is defined in \eqref{eq:replica-allocations}.
For a given observation $i$, the corresponding contribution to the derivative is positive whenever $\sum_{k=1}^K\rho_{ik}\epsilon_{ik}^2<v_i$.
This can occur if a small set of particles predicts the observation particularly well: the responsibilities $\rho_{ik}$ are concentrated in a few particles with small residuals $\epsilon_{ik}^2$. Crucially therefore, for a sufficiently large number of particles $K$, there is enough flexibility such that the weighted residuals can be small for \textit{all} $y_i$ and the overall objective is positive. Gradient descent on this objective will therefore push $\sigma$ towards zero. In turn, smaller $\sigma$ will penalise the score term more heavily relative to the prior, meaning particles are encouraged even more to fit the training observations so that the residuals are small. This cycle drives $\sigma \to 0$ and all of the predictive variation is explained through the mixture over functions rather than observation noise.

Instead, splitting the data into two disjoint subsets $\mathcal{D}_{\text{train}}$ and $\mathcal{D}_{\text{val}}$, prevents overfitting and means the same observations can no longer be used both for fitting the particles and $\sigma$. Other regularisation strategies, such as imposing a prior on $\sigma$, restricting the number of particles or restricting the number of inducing points are possible. However, the primary drawback of all these approaches is the requirement for dataset-dependent tuning.
\subsection{Hyperparameter optimisation}
\label{ap:kernel-hyperparameters}

Let $k_\gamma$ denote a kernel family parameterised by $\gamma$. We wish to optimise the kernel hyperparameters $\gamma$ and observation noise $\sigma$ to improve predictive generalisation performance.
We initialise $\gamma\leftarrow\gamma_{\mathrm{init}}$ and $\sigma\leftarrow\sigma_{\mathrm{init}}$. These initial values may be chosen using standard GP defaults or obtained by maximising an objective for a standard GP, such as the marginal log-likelihood (MLL), leave-one-out cross-validation (LOO-CV), or the PPGPR objective. In particular, LOO-CV directly targets predictive performance for a single-particle approximation, while PPGPR corresponds to restricting the inducing mixing law $\tau$ to be Gaussian, as discussed in Appendix~\ref{ap:ppgr}. Therefore, these initialisations provide natural ``warm-start'' approximations to the PrO-GP hyperparameters.
As discussed in Appendix \ref{ap:degenerate-sigma}, we partition the data into $\mathcal D_{\mathrm{train}}$ and $\mathcal D_{\mathrm{val}}$ to avoid the particle law absorbing observation noise and driving $\sigma$ to zero. Denote by $\mathcal{R}_{\mathrm{train}}\left(\widehat{\tau}_{\mathbf U^{1:K}};\gamma,\sigma\right)$ the empirical predictive score evaluated on $\mathcal{D}_{\mathrm{train}}$ using parameters $\gamma$ and $\sigma$ and particle law $\widehat{\tau}_{\mathbf U^{1:K}}$. Denote by $\mathcal{R}_{\mathrm{val}}\left(\widehat{\tau}_{\mathbf U^{1:K}};\gamma,\sigma\right)$ the corresponding objective evaluated on $\mathcal{D}_{\mathrm{val}}$. The empirical particle law $\widehat{\tau}_{\mathbf U^{1:K}}$ and kernel hyperparameters $\gamma$ are fit using $\mathcal{R}_{\mathrm{train}}$, while $\sigma$ is updated using $\mathcal{R}_{\mathrm{val}}$.
We alternate between Replica--Gibbs updates of the particle law, gradient updates of $\gamma$ on the training objective, and gradient updates of $\sigma$ on the validation score. Since changing $\gamma$ changes the prior covariance, whitening transformation, conditional means, and conditional variances, the corresponding quantities are recomputed following each block of kernel-hyperparameter updates. Algorithm~\ref{alg:pro-gp-hyperparams} summarises the procedure.
Algorithm~\ref{alg:pro-gp-hyperparams} incurs an additional cost of repeated Cholesky factorisations of the updated kernel matrices, which is $\mathcal{O}(n m^2)$ per iteration.
\begin{algorithm}[t]
\caption{PrO-GP hyperparameter optimisation}
\label{alg:pro-gp-hyperparams}
\begin{algorithmic}
\State \textbf{Input:} data $\mathcal D_n=\{(x_i,y_i)\}_{i=1}^n$, inducing locations $\Lambda$, number of particles $K$, kernel family $k_\gamma$, initial hyperparameters $\gamma_{\mathrm{init}}$ and $\sigma_{\mathrm{init}}$, learning rates $\eta_\gamma$ and $\eta_\sigma$, number of adaptation sweeps $T_{\text{adapt}}$ and final Gibbs sweeps $T_{\mathrm{final}}$.
\State \textbf{Initialisation:}
\State Split $\mathcal D_n$ into training and validation sets $\mathcal D_{\mathrm{train}}$ and $\mathcal D_{\mathrm{val}}$.
\State Initialise $\gamma\leftarrow\gamma_{\mathrm{init}}$ and $\sigma\leftarrow\sigma_{\mathrm{init}}$.
\State Compute $K_\pi$ and its Cholesky factorisation $K_\pi=L_\pi L_\pi^{\top}$.
\State Compute the $\gamma$-dependent conditional quantities $H$, $B$, and predictive variances $\{v_i\}_{i=1}^n$.
\State Sample initial whitened particles $\mbf Z^k_{(0)}\sim\mathcal N(0,I_m)$ independently for $k=1,\ldots,K$.
\State \textbf{For} $t=1,\ldots,T_{\mathrm{adapt}}$:
\State \hspace{1em}$\bullet$ \textbf{Particle update:} perform a Replica--Gibbs sweep using $\mathcal D_{\mathrm{train}}$ and the current $(\gamma,\sigma)$.
\State \hspace{1em}$\bullet$ \textbf{Kernel update:} $
    \gamma \leftarrow \gamma-\eta_\gamma\nabla_\gamma
    \mathcal R_{\mathrm{train}}
    \left(\widehat{\tau}_{\mathbf U^{1:K}_{(t)}};\gamma,\sigma\right),
$
\State \hspace{1em}$\bullet$ Recompute $K_\pi$, $L_\pi$, $H$, $B$, and the predictive variances using the updated $\gamma$.
\State \hspace{1em}$\bullet$ \textbf{Noise update:} $
    \sigma \leftarrow \sigma-\eta_\sigma\nabla_\sigma
    \mathcal R_{\mathrm{val}}
    \left(\widehat{\tau}_{\mathbf U^{1:K}_{(t)}};\gamma,\sigma\right),
$
\State \textbf{Final particle update:} with $(\gamma,\sigma)$ fixed at their final values, perform $T_{\mathrm{final}}$ Replica--Gibbs sweeps.
\State \textbf{Return:} final hyperparameters $(\gamma,\sigma)$ and unwhitened particles $\mbf U^k_{(t)}=L_\pi^\gamma\mbf Z_{(t)}^k$, for $k=1,\ldots,K$ and $t > T_{\text{adapt}}$.
\end{algorithmic}
\end{algorithm}

\subsection{Discussion on \texorpdfstring{$\lambda_n$}{lambda	extunderscore n}}
\label{ap:lambda_n_discussion}

The $\lambda_n$ parameter describes the trade-off between the prior KL regularisation term and the data fitting term. In practical applications, it may be preferable to tune $\lambda_n$ to achieve the best possible performance. In the parametric setting, standard assumptions support scaling $\lambda_n = c\cdot\sqrt{n}$ for some $c \in \mathbb{R}^+$, up to logarithmic factors \citep{mclatchie2025predictively}. In particular, it is necessary that the effective complexity of the model is fixed, whilst $n$ increases. However, for PrO-GPs the dimensionality of the model and the effective complexity of the prior increase with either $n$ or the number of inducing points $m$. Nevertheless, for any \textit{fixed} regression task at hand and any particular GP prior we note:
\vspace{-0.5em}
\begin{itemize}
    \item For a fixed number of inducing points, the effective complexity of the model is fixed. Hence, cross-validation over $c$ in some range of values specific to the problem at hand is well-justified. 
    \item For problems wherein $m \ll n$, the score term dominates the KL prior. Apart from atypical problems where especially heavy regularisation towards the prior is beneficial, the impact of the $\lambda_n$ parameter is negligible.
\end{itemize}
In summary, we anticipate this is useful guidance for practitioners, but leave detailed theoretical treatment for future work.

\subsection{Sampling via the Wasserstein Gradient Flow}
\label{ap:wgf-sampling}

The Replica-Gibbs sampler presented in Section \ref{sec:computation} exploits the log-score structure for the replica augmentation, and the Gaussian observation model for conjugate particle updates. Although extensions to handle non-Gaussian likelihoods would be possible, the log-score is indispensable. Therefore, we consider the more general objective \eqref{eq:full-function-pro} and present an alternative sampling scheme based on the Wasserstein Gradient Flow (WGF) \citep{ambrosio2005gradient} for PrO-GP computation with proper scoring rules $S: \mathcal{P}(\mathcal{Y}) \times \mathcal{Y} \to \mathbb{R} \cup \{\infty \}$.  

Suppose that each score contribution $S_i(\tau) \coloneq S(P_\tau^{x_i}, y_i)$ admits a first variation $\frac{\delta S_i}{\delta\tau}(\tau)(\mbf{u})$ which is differentiable in $\mbf{u}$. 
Define $\ell_{\tau}(\mbf{u}) \coloneq \frac{\lambda_n}{n} \sum_{i=1}^n \frac{\delta S_i}{\delta\tau}(\tau)(\mbf{u})$. 
Recall the definition of $\mathcal{J}_S^{\pi}$ in \eqref{eq:full-function-pro}. Its Wasserstein gradient is given by
\begin{align*}
    \quad \nabla_{W_2}\mathcal{J}_{S}^{\pi}[\tau_t] &= \nabla_{W_2} \frac{\lambda_n}{n}\sum_{i=1}^nS_i(\tau_t) + \nabla_{W_2} \text{KL}(\tau_t \Vert \pi) =\nabla_\mbf{u}\ell_{\tau_t} + \nabla_\mbf{u}\log \frac{\tau_t}{\pi}.
\end{align*}
The WGF is then defined as the weak solution $\tau_t$ of the continuity equation
\begin{equation*}
    \partial_t \tau_t= \nabla \cdot \left( \tau_t \nabla_{W_2}\mathcal{J}_S   ^{\pi}[\tau_t] \right).
    \label{eq:continuity-wgf}
\end{equation*}
To implement this numerically, expand the above to give
\begin{equation*}
    \partial_t \tau_t = \nabla_\mbf{u} \cdot [\tau_t(\nabla_\mbf{u} \ell_{\tau_t} - \nabla_\mbf{u} \log \pi)] + \Delta\tau_t.
\end{equation*}
This is the nonlinear Fokker-Planck equation associated with the McKean-Vlasov SDE:
\begin{align*}
    \mathrm{d}\mbf{U}_t &= -[\nabla_\mbf{u} \ell_{\tau_t}(\mbf{U}_t) - \nabla_\mbf{u} \log \pi(\mbf{U}_t)] \mathrm{d}t + \sqrt{2} \mathrm{d}B_t,
\end{align*}
where $\tau_t = \text{Law}(\mbf{U}_t)$ and $B_t$ is Brownian motion. We approximate this SDE by introducing $K$ particles $\mbf{U}^{1:K} = (\mbf{U}^1, \ldots, \mbf{U}^K) \in (\mathbb{R}^n)^K$ and substituting the empirical measure $\hat{\tau}_t^{1:K} \coloneq \frac{1}{K} \sum_{k=1}^K \delta_{\mbf{U}_t^k}$. This gives the interacting system
\begin{align*}
    \mathrm{d}\mbf{U}^k_t = - \left[ \nabla_\mbf{u} \ell_{\hat{\tau}^{{1:K}}_t}(\mbf{U}_t^k) - \nabla_\mbf{u} \log \pi(\mbf{U}_t^k) \right] \mathrm{d}t + \sqrt{2}\mathrm{d}B_t^k
\end{align*}
where each $B^{k}$ is an independent Brownian motion on $\mathbb{R}^n$. Finally, these dynamics can be simulated numerically via an Euler--Maruyama discretisation with step size $\eta$. 
Under mild regularity conditions, this algorithm converges to the minimiser of the PrO objective, with additional error arising from finite-particle approximation and time discretisation \citep{suzuki2023convergence}.
Specialising to PrO-GPs with the exact reduction and Gaussian likelihood, we have $S_i(\tau) = - \log \int \varphi_{\sigma^2}(y_i - \mbf{v}_i) \tau(\dd \mbf{v})$, and so
\begin{align*}
    \nabla_{\mbf{u}} \ell_{\tau}(\mbf{u}) = - \frac{\lambda_n}{n}\sum_{i=1}^n \frac{\varphi_{\sigma^2}(y_i - \mbf{u}_i)}{\int \varphi_{\sigma^2}(y_i - \mbf{v}_i) \tau(\mathrm{d}\mbf{v})} \frac{y_i-\mbf{u}_i}{\sigma^2} \mbf{e}_i
\end{align*}
Assuming the prior is $\mathcal{N}(0, K_{XX})$,
then
$
    \nabla_{\mbf{u}} \log \pi(\mbf{u}) = -K_{XX}^{-1}\mbf{u}
$
and the interacting particle system becomes
\begin{align*}
    \mathrm{d}\mbf{U}^k_t
    =
    \left[
        \frac{K\lambda_n}{n}
        \sum_{i=1}^n
        \frac{
            \varphi_{\sigma^2}(y_i-\mbf{U}_{t,i}^k)
        }{
            \sum_{j=1}^K
            \varphi_{\sigma^2}(y_i-\mbf{U}_{t,i}^j)
        }
        \frac{y_i-\mbf{U}_{t,i}^k}{\sigma^2}
        \mbf{e}_i
        -
        K_{XX}^{-1}\mbf{U}_t^k
    \right]
    \mathrm{d}t
    +
    \sqrt{2}\mathrm{d}B_t^k.
\end{align*}
A whitened representation can also be used to reduce the instability from repeated applications of $K_{XX}^{-1}$. Compared to Algorithm \ref{alg:whitened-replica-gibbs}, this scheme introduces additional step-size parameter $\eta$, which may be difficult to tune in practice \citep[see, e.g.,][]{sharrock2025tuning}. It also takes gradients of the log-score, resulting in a term proportional to $\frac{1}{\sigma^2}$. As a result, we find this approach introduces additional error, is slower to converge, and is significantly less stable. 

\section{ADDITIONAL EXPERIMENTAL DETAILS}
\label{ap:experimental-details}

This section describes the experimental setup used in Section \ref{sec:Experiments}. All experiments were run on a single desktop machine with an AMD Ryzen 5 7600 CPU and 24 GB of RAM, using 8 CPU threads. Our implementation uses \href{https://github.com/QuantClimate/GPJax}{\texttt{GPJax}} \citep{Pinder2022} for the GP models and \href{https://github.com/blackjax-devs/blackjax}{\texttt{BlackJAX}} \citep{cabezas2024blackjax} for sampling.
%

\subsection{Negative Log Predictive Density}
\label{ap:nlpd}

 Let $(x_\star, y_\star)$ be a test point, such that $y_\star \sim P_0^{x_{\star}}$. For \GP{}s, $\text{NLPD}_{\GP}(x_\star, y_\star) \coloneq  - \log \varphi_{v_\star}(y_\star - \mu_\star)$, where $\mu_\star$ and $v_\star$ are the predictive mean and variance specified by the model. For \PROGP{}s, $\text{NLPD}_{\PROGP{}}(x_\star, y_\star) \coloneq -\log \frac{1}{K} \sum_k \varphi_{s_X(x_\star, x_\star) + \hat\sigma^2}(y_* - \mu_{\mathbf{U}
 _{k}}(x_\star))$, where $\mu$ and $s_X$ are defined in \eqref{eq:sn_squared}. 
This is just the negative log of the particle approximation to the PrO predictive in $\eqref{eq:pro-predictive}$.

\subsection{Synthetic experiments from Section \ref{sec:synthetic}}
\label{ap:synthetic}

We measure the held-out NLPD of \PROGP{} vs \GP{}, as a function of training set size $n$, with $n \in [100, 200, 400]$. We use a synthetic 1-D generator in four different regimes: `Block outliers', 'Heteroskedastic', 'Multimodal' and 'Well-specified'. For all regimes, we use an RBF kernel $k_{\ell, \alpha} = \alpha^2 \exp{\left( -\frac{||x_i-x_j||^2}{2\ell^2} \right)}$,  draw a lengthscale $\ell \sim \mathcal{U}[0.5, 1.0]$, standard deviation $\alpha \sim \mathcal{U}[0.5, 2.0]$ and input $X \sim \mathcal{U}[-2,2]^n$. We then draw a latent function $f \sim \mathcal{GP}(0, K_{XX})$, $K_{XX} = k_{\ell, \alpha}(x_i, x_j)$, and generate $y= f(X) + \epsilon, \epsilon \sim\mathcal{N}(0, \sigma^2)$ with $\sigma = 0.15 \alpha$. We then generate the misspecified data in each regime as follows:
\begin{itemize}
    \item \textbf{Block outliers} Split points 70/30 into train/test uniformly at random. Sample one contiguous outlier region $\mathcal{R} = [c-w, c+w] \subset [x_{\min}, x_{\max}]$, with center $c \sim \mathcal{U}[x_{\min}, x_{\max}]$ and width $0.4$. Corrupt only the training points that fall in $\mathcal{R}$: draw a sign $s \in [-1, 1]$, and add an offset $s \cdot \, \alpha$ to $y$ at those points.
    \item \textbf{Multimodal} Draw an additional two independent functions
    $f_a, f_b \sim \mathcal{GP}(0, K_{XX})$ and a branch center $c \sim \mathcal{U}[x_{\min}, x_{\max}]$.
    Blend each into $f$ with a weight $w(x)$ that is $0$ near $c$ and grows to $1$ away from it,
    giving two branches $f_a' = f + w(x) f_a$, $f_b' = f + w(x) f_b$ that agree near $c$ and diverge
    elsewhere. Assign each point to branch $a$ or $b$ independently with probability $0.5$,
    $z \sim \text{Bernoulli}(0.5)$, giving $y_{\text{truth}} = z f_b' + (1-z) f_a'$. Split 70/30
    into train/test uniformly at random.
    \item \textbf{Heteroskedastic} Sample a region with center $c \sim \mathcal{U}[x_{\min}, x_{\max}]$
    and inflate the noise locally: $\sigma(x)^2 = \sigma^2 + 0.4\,\alpha \cdot
    b(x)^2$, where $b$ is a bump equal to $1$ at $c$ and decaying to $0$ over its width. Draw $y = f(X) + \epsilon(x)$,
    $\epsilon(x) \sim \mathcal{N}(0, \sigma(x)^2)$, replacing the constant-$\sigma$ noise model above
    with this position-dependent one. Split 70/30 into train/test uniformly at random.
\end{itemize}
For each regime, fit \GP{} and \PROGP{} models and compute the NLPD on the held-out test data. We repeat this over 20 splits to report the mean NLPD and 95\% credible interval. For \GP{}, we optimise the kernel-hyperparameters and observation noise using the L-BFGS-B optimiser. For \PROGP{}, we fix the kernel-hyperparameters to the \GP{} values and use 100 adaptation steps for tuning the observation noise $\sigma$, followed by 30 final sampling steps. In full, we use Algorithm \ref{alg:pro-gp-hyperparams} with the following configuration:

\begin{itemize}
    \item 70/30 random split for $\mathcal{D}_{\text{train}}/\mathcal{D}_{\text{val}}$
    \item $K = 50$ and $\lambda_n = n$
    \item $\sigma_{\mathrm{init}}$ and $\gamma_{\mathrm{init}}$  to the \GP{} optimised values.
    \item $T_{\text{adapt}} = 100$, $T_{\mathrm{gibbs}} = 30$, $\eta_{\sigma} = 0.1$ and $\eta_{\gamma} = 0$ (i.e. take \textit{no} kernel adaptation steps)
\end{itemize}

\subsection{Traffic-flow experiment from Section \ref{sec:traffic-flow}}
\label{ap:real-examples-1}

We use the data and plotting tools available in the \href{https://github.com/vabor112/pems-regression/tree/main/pems_regression}{\texttt{pems-regression}} library which contains a graph of the San Jose freeway sensor network with 1,016 nodes and 1,173 edges. Traffic speed measurements are available at 325 sensor locations. We use $n=200,225,250$ and $275$ sensors for training as shown in Table \ref{tab:graph-gp}, and report average held-out NLPD on 5-fold random splits.

We train a \GP{} with a graph Mat\'{e}rn kernel \citep{borovitskiy2021matern}, using the implementation provided in \href{https://github.com/QuantClimate/GPJax}{\texttt{GPJax}} \citep{Pinder2022}. Following the approach in the \texttt{pems-regression} library,  we fit the graph \GP{} with five random restarts. For each restart, the smoothness and lengthscale are drawn from $\mathcal{U}[1,10]$ and the kernel variance is initialised to 1. These parameters and the observation variance are optimised by maximising the MLL with Adam, with learning rate 0.1 and 200 iterations. We keep the restart with the highest training MLL. We then fit a \PROGP{} with the kernel-hyperparameters fixed to those of the selected \GP{}, following the approach and parameters as described for the synthetic regression experiment in Appendix \ref{ap:synthetic}. The results of the posterior predictive inferences, overlaid onto the road network, are displayed in Figure \ref{fig:road_grid}\footnote{Unlike the corresponding figure in \citet{borovitskiy2021matern}, which shows only the posterior standard deviation of the latent function, our figure shows the standard deviation of the whole posterior predictive distribution. This includes both the uncertainty about the function and the observation noise.}. Circles represent sensor nodes and dots represent training nodes (i.e. sensors where speed measurements are available).

\begin{figure}[h!]
\centering
\includegraphics[width=0.85\textwidth]{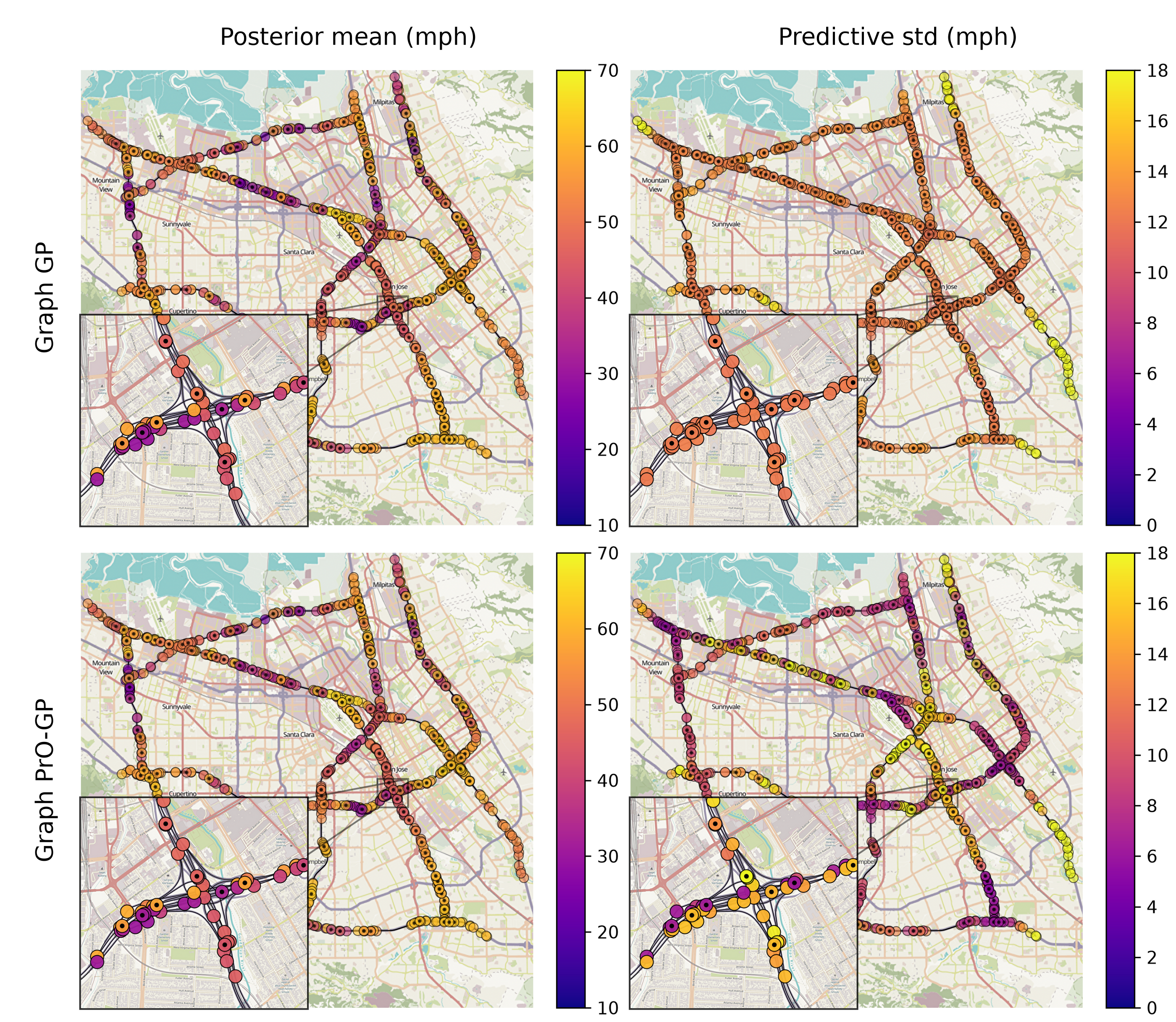}
    \caption{Graph \GP{} and Graph \PROGP{} posterior inference on the San Jose road network. Zoomed in predictions are in the bottom left of each plot. Circles represent sensor nodes. Black dots represent training nodes. \emph{Top left:} Graph \GP{} predictive mean. \emph{Top right:} Graph \GP{} predictive standard deviation. \emph{Bottom left:} Graph \PROGP{} predictive mean. \emph{Bottom right:} Graph \PROGP{} predictive standard deviation}
    \label{fig:road_grid}
\end{figure}

\subsection{UCI regression experiments from Section \ref{sec:uci-regression}}
\label{ap:uci-regression}
\paragraph{Experimental setup} We partition the data into 80/20 train/test sets and generate a random 10-fold split for each dataset. All datasets are standardised in output space. Throughout, we use an RBF kernel with independent lengthscales per dimension and a prior mean of zero. Lengthscale values are initialised as the square root of the input data dimension and are bounded in the range $[0.001, 1000]$. For each dataset with $n \leq 1030$ we fit a \GP{} model using the marginal log-likelihood and L-BFGS-B optimiser. For datasets with $1030 < n < 45730$, we use $m=200$ inducing points initialised with kmeans, and fit the \SVGP{} and \PPGPR{} model variants using an Adam optimiser with an initial learning rate of 0.1, batch-size 512 and 10000 iterations. For \textit{kin40k} and \textit{protein} we use $m = 500$ inducing points. We set $\beta_{\text{reg}} =1$ in the \PPGPR{} objective, which corresponds to the same weighting $\lambda_n /n = 1$ used for \PIPROGP{}. Whilst \citet{jankowiak2020parametric} propose several model variants based on their objective in \eqref{eq:ppgpr}, we compare against ``PPGPR-Chol'', which employs a full-rank Gaussian covariance and is therefore the closest Gaussian counterpart to our method. This allows us to isolate the gains from extending the mixing law beyond the Gaussian family. We fit the \PROGP{} and \IPROGP{} model variants using the approach and parameters in Appendix \ref{ap:synthetic};  $\gamma_{\mathrm{init}}$ is initialised from the \GP{}/\SVGP{}/\PPGPR{} optimised values (for \PROGP{}/\SIPROGP{}/\PIPROGP{}, respectively), and all other parameters are initialised as in the synthetic experiment.

\paragraph{Machine and stock analysis} As shown in Table \ref{table:uci-results}, \PROGP{} reduces NLPD in all datasets except machine and stock. For both datasets (and \emph{concreteslump} for comparison), we display residual plots of the Cholesky-whitened \GP{} predictions in Figure \ref{fig:quantile-plots}. Under the GP prior, these should be normally distributed.

For \emph{machine}, the whitened residuals are broadly consistent with normality, with only a small departure in the left tail. For \emph{stock}, the centre is also mostly normal with departures in both tails, suggesting slightly heavier tails than a Gaussian. In contrast, \emph{concreteslump} displays significantly more right-skewness and much heavier tails. This aligns with the results from Table \ref{table:uci-results} and theory for finite-parameter PrO posteriors: when the model is already well-specified, fitting a PrO posterior offers no advantage in predictive performance.

\begin{figure}[h!]
\centering
\includegraphics[width=0.6\textwidth]{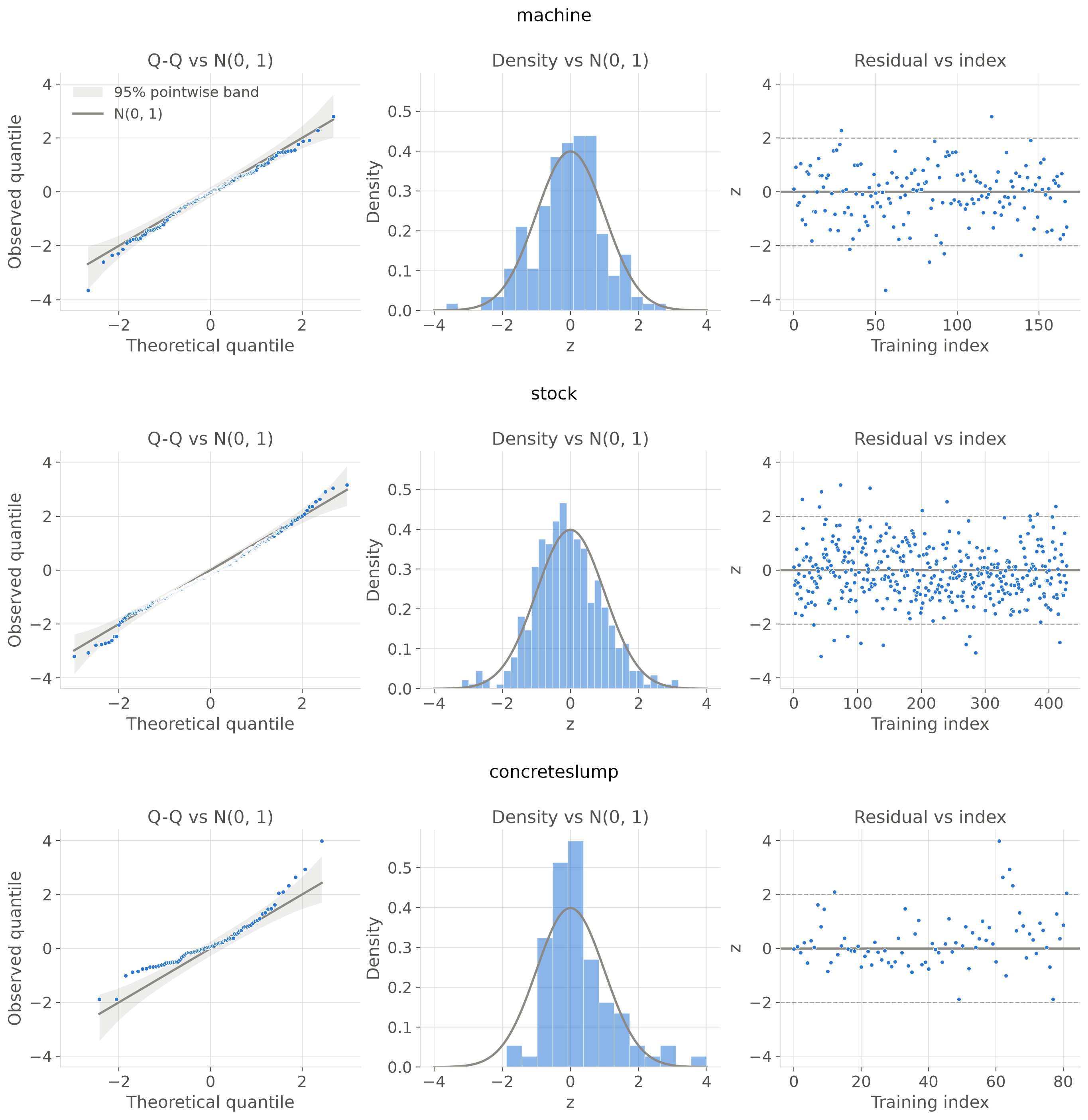}
    \caption{Diagnostic plots of whitened GP residuals for the machine, stock, and concreteslump datasets. Left: normal Q–Q plots with 95\% pointwise reference bands. Centre: residual histograms overlaid with the standard normal density. Right: residuals against training index, with reference lines at 0 and ±2 standard deviations}
    \label{fig:quantile-plots}
\end{figure}

\subsection{Connection to explicit mixture models}
\label{ap:mixture-comparison}

 Recall the form of the particle approximation to the PrO-GP predictive in \eqref{eq:pro-particle-predictive}:
\begin{align*}
    P_{Q_n^{\text{PrO}}}^{x_\star}(\dd y) \approx \frac{1}{K} \sum_{k=1}^K \mathcal{N}(\dd y ; \mu_{\mathbf{U}^{k}}(x_\star), s_X(x_\star, x_\star) + \sigma^2),
\end{align*}
and the Gibbs measure used to target this predictive in \eqref{eq:replica-gibbs-target}:
\begin{equation*}
    \tau_{n,K}(\dd\mbf{U}^{1:K})
    \propto
    \prod_{k=1}^K\pi(\dd\mbf{U}^k)
    \prod_{i=1}^n
    \Big\{
    \frac{1}{K}\sum_{k=1}^K g_i(\mbf{U}^k)
    \Big\}^{\frac{K\lambda_n}{n}}.
    \vspace{2mm}
\end{equation*}

In this section we compare PrO-GPs to the Overlapping Mixture of Gaussian Processes
(OMGP) \citep{lazaro2012overlapping}, the explicit mixture model whose predictive is
closest in form to \eqref{eq:pro-particle-predictive}. The OMGP posits $M$ independent latent
functions $f_1,\dots,f_M \sim \mathcal{GP}(m,k)$, mixing weights
$\boldsymbol{\omega} = (\omega_1,\dots,\omega_M)$, and an
assignment $z_i \in \{1,\dots,M\}$ for each observation, with
\[
  z_i \sim \mathrm{Cat}(\boldsymbol{\omega}), \qquad
  y_i \mid z_i = m,\, f_m \sim \mathcal{N}\big(f_m(x_i), \sigma^2\big).
\]
Writing each component in the same finite feature basis as Section~\ref{sec:computation},
so that $\mbf{U}^m \sim \pi$ independently, and marginalising the assignments, the
OMGP posterior for fixed $\boldsymbol{\omega}$ and $\sigma$ is
\begin{equation}
  Q^{\text{OMGP}}_n(\dd\mbf{U}^{1:M} \mid y)
  \;\propto\; \prod_{m=1}^M \pi(\dd \mbf{U}^m)
  \prod_{i=1}^n \Big\{ \sum_{m=1}^M \omega_m\, g_i(\mbf{U}^m) \Big\},
  \label{eq:omgp-post}
\end{equation}
with $g_i$ the same Gaussian density as in \eqref{eq:common-finite-dim-opt}. The
OMGP posterior predictive at $x_\star$ is
$\sum_{m} \omega_m\, \mathcal{N}\big(\mu_{\mbf{U}^m}(x_\star),\, s_X(x_\star,x_\star) + \sigma^2\big)$,
averaged over $Q_n^{\text{OMGP}}$. Hence, comparing \eqref{eq:omgp-post} with the particle target \eqref{eq:replica-gibbs-target}, the link is exact: with $M = K$ and uniform weights $\omega_m = 1/K$, $\tau_{n,K}$ is the OMGP posterior with its likelihood raised to the power $r = K\lambda_n/n$.

However, there are significant conceptual and practical differences, which we note here. In OMGPs the mixture is an explicit \textit{parameter} of the model and the posterior we target changes with $M$. In PrO-GPs, the mixture is the posterior itself and $K$ is purely a numerical setting: as $K \to \infty$ the particle approximation converges to the PrO-GP posterior. Furthermore, the tempering factor $r$ means that each particle is informed by roughly $\lambda_n$ effective points regardless of $K$. In contrast, for OMGPs each component is informed by roughly $n/M$ points; as $M$ grows, the components will become increasingly prior-dominated. Lastly, we emphasise that this connection materialises due to the specific use of the log-score in the PrO-GP objective. Under other proper scoring rules, such as those permitted by our general objective in Appendix~\ref{ap:theoretical-proofs}, the score of $P_Q$ is no longer the likelihood of any tempered Bayesian mixture model, and the correspondence with OMGP disappears.

\paragraph{Numerical evaluation.} Aside from these conceptual differences, we further perform an experiment to isolate the difference between changing the inference and changing the model. To do so, we use OMGP as a controlled comparison. We fix the kernel-hyperparameters and inducing basis from the corresponding \GP{} / \SVGP{} fit and enrich the model with $M$ components. Whilst \citet{lazaro2012overlapping} fit OMGP variationally, the form of the OMGP target in \eqref{eq:omgp-post} allows us to leverage the Replica--Gibbs sampler and remove confounding from the choice of sampling scheme. Rather than optimising $\gamma$ and $\sigma$ on the variational bound, we place a $\mathrm{Dir}(a\mathbf{1}_M)$ prior on $\boldsymbol{\omega}$ and an Inverse-Gamma prior on $\sigma^2$ centred on the \GP{} / \SVGP{} estimate and shared across components, and sample both within each Gibbs sweep. We evaluate OMGP with $M \in \{1,2,3,5,10,20\}$, $a = 1$, and $1000$ sweeps. Henceforth, we denote the OMGP model with this configuration by $\textsc{omgp}$. We compare against the \PROGP{} and \SIPROGP{} on the same datasets and splits in Section~\ref{sec:uci-regression} using held-out NLPD. The results are presented in Figure \ref{fig:omgp-inducing}.

Firstly, we observe that adding components improves the \textsc{omgp} over $M = 1$ on every dataset, confirming that a mixture predictive is beneficial on these tasks. However, the best value of $M$ is strongly dataset dependent, ranging from $M = 2$ (\textit{machine}, \textit{stock}) to $M =20$ (\textit{protein}), and performance is often not monotone in $M$: on \textit{autos}, \textit{machine}, \textit{stock}, \textit{energy}, \textit{autompg} and \textit{parkinsons}, the NLPD worsens for large $M$ as the superfluous components compete over a fixed number of data assignments. This is most visible on \textit{machine} and \textit{stock}, which we identified in Appendix~\ref{ap:uci-regression} as close to well specified: here the \PROGP{} remains close to the \GP{}, whereas the \textsc{omgp} degrades steadily as $M$ grows.

Comparing against the \textsc{omgp} at its best $M$, chosen favourably with hindsight on the test set,
\PROGP{}s are better on six datasets (\textit{concreteslump}, \textit{autos}, \textit{wine}, \textit{elevators}, \textit{kin40k} and \textit{protein}), at every $M$ considered, and comparable on five (\textit{machine}, \textit{energy}, \textit{concrete},
\textit{abalone} and \textit{airquality}). The \textsc{omgp} is better on the remaining five, but by a small margin ($\leq 0.02$) on \textit{autompg}, \textit{skillcraft} and \textit{stock}; only on \textit{housing} and \textit{parkinsons} does the best \textsc{omgp} improve by more than $0.1$. Overall,
an explicit mixture can match \PROGP{}s, but only once $M$ is tuned per
dataset, whereas \PROGP{}s achieve this with a fixed number of particles and no choice of
mixture size.

\begin{figure}[t]
\centering
\includegraphics[width=0.9\textwidth]{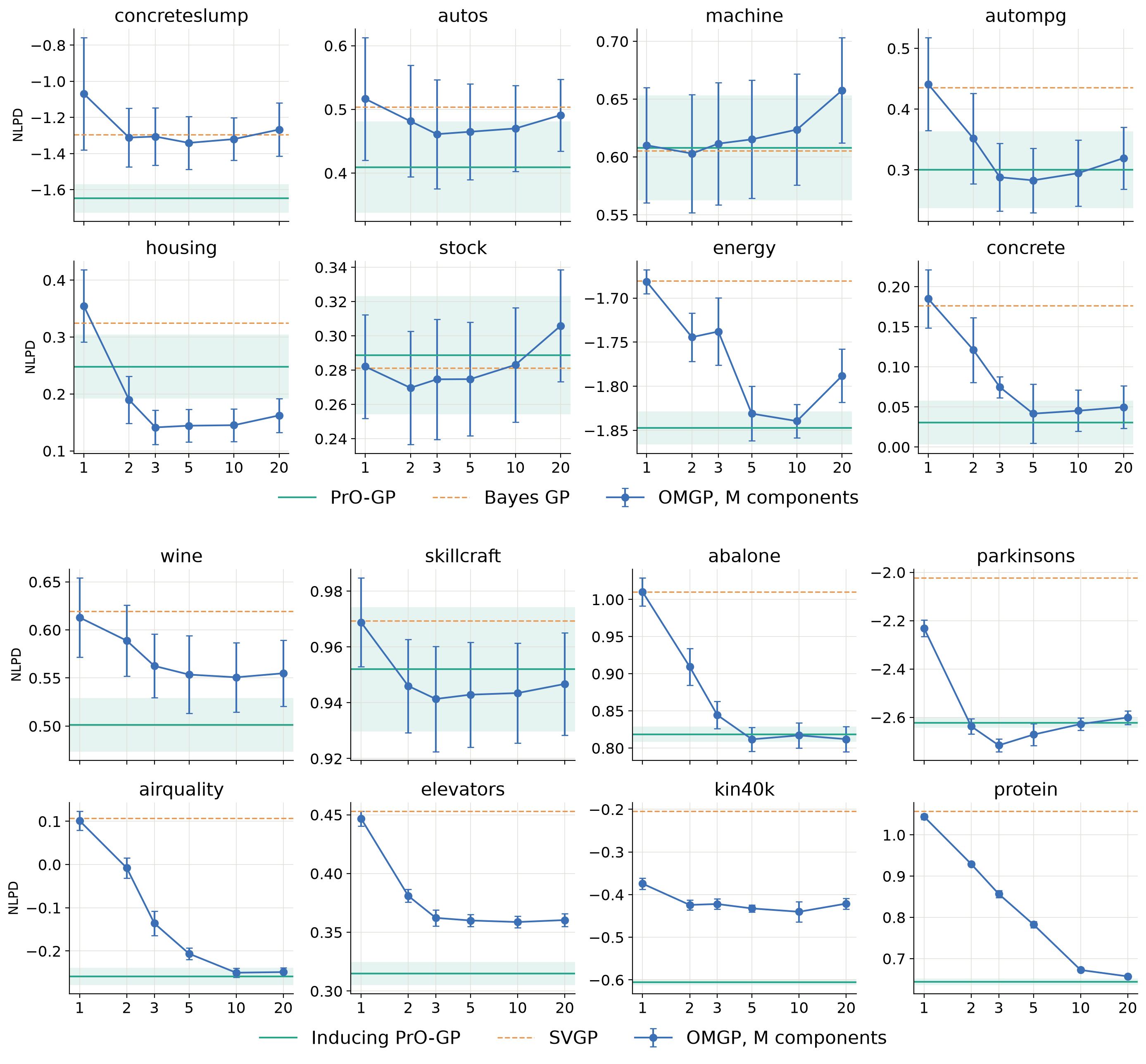}
    \caption{Held-out NLPD on 16 UCI regression datasets. Blue: \textsc{omgp}
as a function of the number of mixture components $M$, with kernel-hyperparameters and inducing points fixed to the \GP{} or \SVGP{} fit. Green line, solid: \PROGP{} or \SIPROGP{} with
$K = 50$ particles. Orange line, dashed: \GP{} or \SVGP{}. Error bars and shaded bands show 1 standard error.}
\label{fig:omgp-inducing}
\end{figure}

\subsection{Numerical runtime analysis}
\label{ap:numerical-runtime-analysis}

In this section we analyse the runtime of the \IPROGP{} variant numerically. Our primary goal is to determine the \textit{additional} cost of fitting a PrO posterior compared to training an \SVGP{}. Therefore, in line with the methodology presented throughout the paper, we consider \IPROGP{} hyperparameters fixed and do not consider the runtime of additional hyperparameter optimisation steps. We use \textit{parkinsons} as the benchmarking dataset and break the total runtime down into two components. First, we analyse the cost-per-iteration of Replica--Gibbs for \IPROGP{} compared to gradient ascent for \SVGP{}. Second, we analyse the number of iterations for convergence by computing the average value of the \IPROGP{} objective per Replica--Gibbs sweep. Both values are computed for $m=100, 250, 500, 1000$ inducing points. For \SVGP{} we use a batch size $b=512$ and for \IPROGP{} we use $K=50$ particles. The results are presented in Figure \ref{fig:runtime}.

The cost-per-iteration is roughly 10 times greater for an \IPROGP{} Replica--Gibbs sweep compared to an \SVGP{} gradient step, with a similar trend as the number of inducing points increases. This aligns with the expectation from theoretical time complexity (see Appendix \ref{ap:computational-complexity}); the additional cost-per-iteration is directly proportional to the number of particles $K$. Furthermore, since we used batch size $b = 512 < 5875 = n$, \SVGP{} requires $\approx 10$ iterations for a full pass of the dataset. Importantly, observe that \IPROGP{} training score plateaus after only $\approx 50$ Replica--Gibbs sweeps, with  diminishing returns thereafter. Hence, we expect the number of Replica--Gibbs sweeps for convergence is approximately $\mathcal{O}(10)$. In comparison, \SVGP{} typically requires $\mathcal{O}(10^3)$ to $\mathcal{O}(10^4)$ gradient ascent steps for convergence. Since the total number of sweeps required is considerably less, the additional time required to fit an \IPROGP{} is small despite the increase in cost-per-iteration.
\vspace{2em}
\begin{figure}[h!]
\centering
\includegraphics[width=0.75\textwidth]{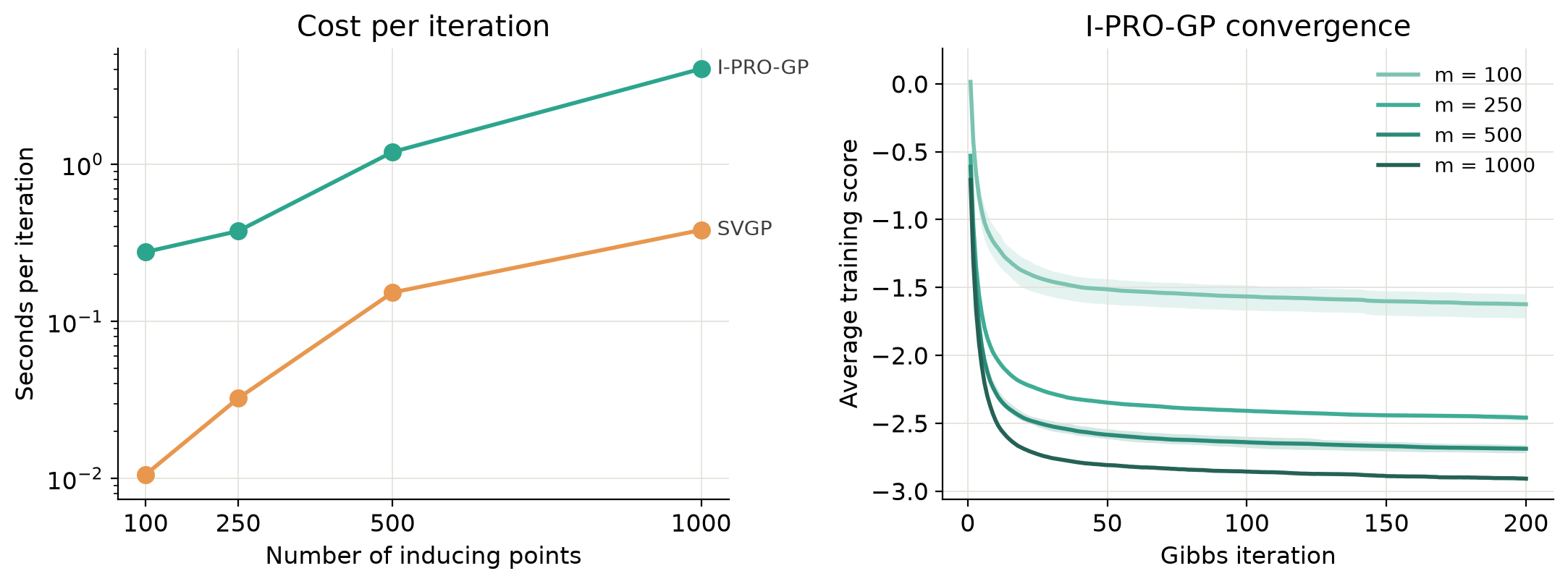}
    \caption{\IPROGP{} runtime analysis. Left: Cost per iteration for \IPROGP{} and \SVGP{} with batch size of 512. Right: \IPROGP{} average training score vs Replica--Gibbs iteration}
    \label{fig:runtime}
\end{figure}

\end{document}